\DeclareMathOperator{\sech}{sech}
\DeclareMathOperator*{\argmax}{arg\,max}
\DeclareMathOperator*{\argmin}{arg\,min}
\DeclarePairedDelimiter{\norm}{\lVert}{\rVert}
\DeclarePairedDelimiter{\abs}{\lvert}{\rvert}
\DeclareMathOperator{\E}{\mathbb{E}}
\DeclareMathOperator{\Tr}{Tr}
\newtheorem{theorem}{Theorem}[section]
\newtheorem{lemma}[theorem]{Lemma}
\newtheorem{prop}[theorem]{Proposition}
\newtheorem{corollary}{Corollary}[section]
\newcommand{\MSE}{\text{MSE}}
\newcommand{\eps}{\epsilon}
\newcommand{\toas}{\overset{\text{a.s.}}{\to}}
\newcommand{\R}{\mathbb{R}}
\definecolor{darkred}{RGB}{100,0,0}
\definecolor{darkgreen}{RGB}{0,100,0}
\definecolor{darkblue}{RGB}{0,0,150}
\title{Active embedding search via noisy paired comparisons}
\def\blfootnote{\xdef\@thefnmark{}\@footnotetext}
\begin{document}

\author{Gregory H.~Canal,
    Andrew K.~Massimino,
    Mark A.~Davenport,
    and Christopher J.~Rozell}

\maketitle

\begin{abstract}
Suppose that we wish to estimate a user's preference vector $w$ from paired comparisons of the form ``does user $w$ prefer item $p$ or item $q$?,'' where both the user and items are embedded in a low-dimensional Euclidean space with distances that reflect user and item similarities. Such observations arise in numerous settings, including psychometrics and psychology experiments, search tasks, advertising, and recommender systems. In such tasks, queries can be extremely costly and subject to varying levels of response noise; thus, we aim to \emph{actively} choose pairs that are most informative given the results of previous comparisons. We provide new theoretical insights into the benefits and challenges of greedy information maximization in this setting, and develop two novel strategies that maximize lower bounds on information gain and are simpler to analyze and compute respectively. We use simulated responses from a real-world dataset to validate our strategies through their similar performance to greedy information maximization, and their superior preference estimation over state-of-the-art selection methods as well as random queries.
\blfootnote{School of Electrical and Computer Engineering, Georgia Institute of Technology, Atlanta, GA, 30332 USA
    \\(e-mail: \{%
        \href{mailto:gregory.canal@gatech.edu}{gregory.canal},%
        \href{mailto:massimino@gatech.edu}{massimino},%
        \href{mailto:mdav@gatech.edu}{mdav},%
        \href{mailto:crozell@gatech.edu}{crozell}%
    \}@gatech.edu)}%
\end{abstract}

\section{Introduction}
We consider the task of user preference learning, where we have a set of \emph{items} (e.g.,\ movies, music, or food)
embedded in a Euclidean space and aim to represent the preferences of a \emph{user} as a continuous point in the same space (rather than simply a rank ordering over the items) so that their preference point is close to items the user likes and far from items the user dislikes. To estimate this point, we consider a system using the
\emph{method of paired comparisons}, where during a sequence of interactions a user chooses which of two given items they prefer~\cite{david1963method}. For instance, to characterize a person's taste in food, we might ask them which one of two dishes they would rather eat for a number of different pairs of dishes. The recovered preference point can be
used in various tasks, for instance
in the recommendation of nearby items, personalized product creation, or
clustering of users with similar preferences. We refer to the entire process of querying via paired comparisons and continuous preference point estimation as \emph{pairwise search}, and note that this is distinct from the problem of searching for a single discrete item in the fixed dataset. A key goal of ours is to \emph{actively} choose the items in each query and demonstrate the advantage over non-adaptive selection.

More specifically, given $N$ items, there are $O(N^2)$ possible paired comparisons. Querying all such pairs is not only prohibitively expensive for large datasets, but also unnecessary since not all queries are informative; some queries are rendered obvious by the accumulation of evidence about the user's preference point, while others are considered ambiguous due to noise in the comparison process. \emph{Given these considerations, the main contribution of this work is the design and analysis of two new query selection algorithms for pairwise search that select the most informative pairs by directly modeling redundancy and noise in user responses}. While previous active algorithms have been designed for related paired comparison models, none directly account for probabilistic user behavior as we do here. To the best of our knowledge our work is the first attempt to search a low-dimensional embedding for a \emph{continuous} point via paired comparisons while directly modeling \emph{noisy} responses.

Our approach builds upon the popular technique in active learning and Bayesian experimental design of greedily maximizing information gain
\cite{settles2012active,lindley1956measure,mackay1992information}. In our setting, this corresponds to selecting pairs that maximize the mutual information between the user's response and the unknown location of their preference point. We provide new theoretical and computational insights into relationships between information gain maximization and estimation error minimization in pairwise search, and present a lower bound on the estimation error achievable by any query strategy.

Due to the known difficulty of analyzing greedy information gain maximization \cite{chen2015sequential} and the high computational cost of estimating mutual information for each pair in a pool, we propose two strategies that each maximize new lower bounds on information gain and are simpler to analyze and compute respectively. We present upper and lower bounds on the performance of our first strategy, which then motivates the use of our second, computationally cheaper strategy. We then demonstrate through simulations using a real-world dataset how both strategies perform comparably to
information maximization while outperforming state-of-the-art techniques and randomly selected queries.

\section{Background}

\subsection{Observation model}

Our goal in this paper is to estimate a user's preference point (denoted as vector $w$) with respect to a given low-dimensional embedding of items constructed such that distances between items are consistent with item similarities, where similar items are close together and dissimilar items are far apart. While many items (e.g., images) exist in their raw form in a high-dimensional space (e.g., pixel space), this low-dimensional representation of items and user preferences offers the advantage of simple Euclidean relationships that directly capture notions of preference and similarity, as well as mitigating the effects of the curse of dimensionality in estimating preferences. Specifically, we suppose user preferences can be captured via an \emph{ideal point model}
in which each item and user is represented using a common set of parameters in $\mathbb{R}^d$, and that a user's overall
preference for a particular item decreases with the
distance between that item and the user's ideal point $w$ \cite{coombs1950psychological}.
This means that any item placed exactly at the user
would be considered ``ideal'' and would be the most
preferred over all other items. Although this model can be applied to the situation where a particular
item is sought, in general we do \emph{not} assume the user point $w$ to be co-located with any item.

The embedding of the items can be constructed through a training set of triplet comparisons (paired comparisons regarding similarity of two items to a third reference item) using one of several standard non-metric embedding techniques such as the Crowd Kernel Learning~\cite{tamuz_adaptively_2011} or
Stochastic Triplet Embedding methods~\cite{van2012stochastic}. In this study, we assume that such an embedding is given, presumably acquired through a large set of crowdsourced training triplet comparisons. We do not consider this training set to be part of the learning cost in measuring a pairwise search algorithm's efficiency, since our focus here is on efficiently choosing paired comparisons to search an \emph{existing} embedding.

In this work, we assume a noisy ideal point model where the probability of a user located at $w$ choosing
item $p$ over item $q$ in a paired comparison is modeled using
\begin{equation}
	P(p \prec q) = f(k_{pq}(\norm{w - q}^2
    	- \norm{w - p}^2)),\label{eq:response}
\end{equation}
where $p\prec q$ denotes ``item $p$ is preferred to item $q$,''
$f(x) = 1/(1+e^{-x})$ is the logistic function, and $k_{pq}\in[0,\infty)$ is the pair's \emph{noise constant}, which represents roughly
the signal-to-noise ratio of
a particular query and may depend on the
values of $p$ and $q$. This type of
logistic noise model is common in psychometrics literature
and bears similarity to the Bradley--Terry
model~\cite{bradley1952rank}.

Note that \eqref{eq:response} can also be written as
\[
P(p \prec q) = f(k_{pq}(a^Tw - b)),
\]
where $a = 2(p-q)$ and $b = \norm{p}^2 - \norm{q}^2$ encode the normal vector and threshold of a hyperplane bisecting items $p$ and $q$. After a number of such queries, the response model in \eqref{eq:response} for each query can be multiplied to form a posterior belief about the location of $w$, as depicted in
Figure~\ref{fig:posterior}.

Note that we allow the noise constant $k_{pq}$ to differ for each item pair to allow for differing
user behavior depending on the geometry of the items being compared.
When $k_{pq}\to\infty$, this supposes a user's selection is
made with complete certainty and cannot be erroneous.
Conversely, $k_{pq}=0$ corresponds to choosing items randomly with
probability $1/2$. Varying $k_{pq}$ allows for differing reliability when items are far apart versus when they are close together.

Some concrete examples for setting this parameter are:
\begin{align}
   \text{constant}: k_{pq}^{(1)} &= k_0,
        \tag{K1}\label{eq:k1}
   \\ \text{normalized}: k_{pq}^{(2)} &= k_0\norm{a}^{-1} = \frac{1}{2} k_0\norm{(p-q)}^{-1},
        \tag{K2}\label{eq:k2}
   \\ \text{decaying}: k_{pq}^{(3)} &= k_0\exp(-\norm{a}) \notag
   \\ &= k_0\exp(-2\norm{(p-q)}).
        \tag{K3}\label{eq:k3}
\end{align}

\begin{figure}[htbp]\centering
\includegraphics[height=1.4in]{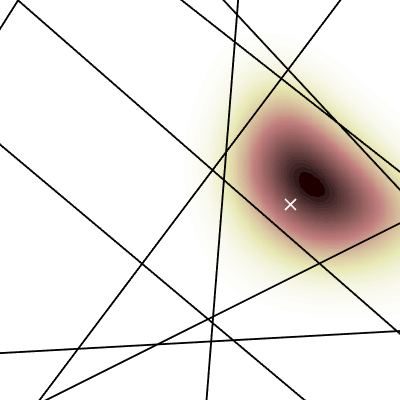}
\caption{Paired comparisons between items
can be thought of as a set of noisy hyperplane queries.
In the high-fidelity case, this
uniquely identifies a convex region of $\R^d$.
In general, we have a posterior distribution
which only approximates the shape of the ideal cell around the true user point, depicted with an x.}
\label{fig:posterior}
\end{figure}

\subsection{Related work}
There is a rich literature investigating statistical inference from paired comparisons and related ordinal query types. However, many of these works target a different problem than considered here, such as constructing item embeddings \cite{tamuz_adaptively_2011}, training classifiers \cite{guo2018experimental}, selecting arms of bandits \cite{jamieson2015sparse}, and learning rankings
\cite{wauthier2013efficient,cao2007learning,chen2015spectral,eriksson2013learning,shah2017simple} or scores \cite{shah2016estimation,negahban2012iterative} over items.

Paired comparisons have also been used for learning user preferences: \cite{qian2015learning} models user preferences as a vector, but preferences are modeled as linear weightings of item features rather than by relative distances between the user and items in an embedding, resulting in a significantly different model (e.g., monotonic) of preference. \cite{brochu2008active} considers the task of actively estimating the maximizer of an unknown preference function over items, while \cite{houlsby2012collaborative} and \cite{chu2005preference} actively approximate the preference function itself, the former study notably using information gain as a metric for selecting queries. Yet, these approaches are not directly comparable to our methods since they do not consider a setting where user points are assigned within an existing low-dimensional item embedding. \cite{tamuz_adaptively_2011} does consider the same item embedding structure as our setting and actively chooses paired comparisons that maximize information gain for search, but only seeks \emph{discrete items} within a fixed dataset rather than estimating a \emph{continuous} preference vector as we do here. Furthermore we provide novel insights into selecting pairs via information gain maximization, and mainly treat information gain for pairwise search as a baseline in this work since our primary focus is instead on the development, analysis, and evaluation of alternative strategies inspired by this approach.

The most directly relevant prior work to our setting consists of the theory and algorithms developed in \cite{massimino2018you} and \cite{jamieson2011active}. In \cite{massimino2018you}, item pairs are selected in stages to approximate a Gaussian point cloud that surrounds the current user point estimate and dyadically shrinks in size with each new stage. In \cite{jamieson2011active}, previous query responses define a convex polytope in $d$ dimensions (as in Figure~\ref{fig:posterior}), and their algorithm only selects queries whose bisecting hyperplanes intersect this feasible region. While this algorithm in its original form only produces a rank ordering over the embedding items, for the sake of a baseline comparison we extend it to produce a preference point estimate from the feasible region. Neither of these studies fundamentally models or handles noise in their active selection algorithms; slack variables are used in the user point estimation of \cite{massimino2018you} to allow for contradicting query responses, but the presence of noise is not considered when selecting queries. In an attempt to filter non-persistent noise (the type encountered in our work), \cite{jamieson2011active} simply repeat each query multiple times and take a majority vote as the user response, but the items in the query pair are still selected using the same method as in the noiseless setting. Nevertheless, these methods provide an important baseline.

\section{Query selection}
We now proceed to describe the pair selection problem in detail along with various theoretical and computational considerations. We show that the goal of selecting pairwise queries to minimize estimation error leads naturally to the strategy of information maximization and subsequently to the development of our two novel selection strategies.

\subsection{Minimizing estimation error}

Let $W \in \mathbb{R}^d~(d\ge 2)$ denote a random vector encoding the user's preference point, assumed for the sake of analysis to be drawn from a uniform distribution over the hypercube $[-\frac{1}{2},\frac{1}{2}]^d$ denoted by the prior density of $p_0(w)$. Unless noted otherwise, we denote random variables with uppercase letters, and specific realizations with lowercase letters. Let $Y_i \in \{0,1\}$ denote the binary response to the $i^{\text{th}}$ paired comparison involving items $p_i$ and $q_i$, with $Y_i=0$ indicating a preference for $p_i$ and $Y_i = 1$ a preference for $q_i$. After $i$ queries, we have the vector of responses $Y^i = \{Y_1,Y_2,\dots Y_i\}$. We assume that each response $Y_i$ is conditionally independent from previous responses $Y^{i-1}$ when conditioned on preference $W$. Applying this assumption in conjunction with a recursive application of Bayes' rule, after $i$ queries we have a posterior density of
\begin{equation}
    p_i(w) \equiv p(w|Y^i) = \frac{p_0(w)\prod_{j=1}^i p(Y_j|w)}{p(Y_i|Y^{i-1})}
    \label{eq:posterior_density}
\end{equation}
where $p(Y_i|w)$ is given by the model in \eqref{eq:response}. This logistic likelihood belongs to the class of \emph{log-concave} (LCC) distributions, whose probability density functions $f(w)$ satisfy $f(\alpha w_1 + (1-\alpha) w_2) \geq f(w_1)^\alpha f(w_2)^{1-\alpha}$ for any $w_1,w_2 \in \mathbb{R}^d$ and $0 \leq \alpha \leq 1$. Since $p_0(w)$ is log-concave and products of log-concave functions are also log-concave \cite{saumard2014log}, we have that the posterior density given in \eqref{eq:posterior_density} is log-concave.

Suppose that after $i$ queries, the posterior $p_i(w)$ is used to produce a Bayesian user point estimate $\widehat{W}_i$. We denote the mean squared error for this estimate by $\MSE_i = \E_{W|Y^i}[\norm{W - \widehat{W}_i}_2^2]$, which provides a direct measure of our estimation error and is a quantity we wish to minimize by adaptively selecting queries based on previous responses. One approach might be to greedily select an item pair such that $\MSE_{i+1}$ is minimized in expectation after the user responds. However, this would require both updating the posterior distribution and estimating $\MSE_{i+1}$ for each possible response over all item pairs. This would be very computationally expensive since under our model there is no closed-form solution for $\MSE_{i+1}$, and so each such evaluation requires a ``lookahead'' batch of Monte Carlo samples from the posterior. Specifically, if $S$ posterior samples are generated for each $\MSE_{i+1}$ evaluation over a candidate pool of $M$ pairs at a computational cost of $C$ per sample generation, and $\MSE_{i+1}$ is estimated with $O(dS)$ operations per pair, this strategy requires $O((C + d)SM)$ computations to select each query. This is undesirable for adaptive querying settings where typically data sets are large (resulting in a large number of candidate pairwise queries) and queries need to be selected in or close to real-time.

Instead, consider the covariance matrix of the user point posterior after $i$ queries, denoted as \[\Sigma_{W|Y^i} = \E[(W-\E[W|Y^i])(W-\E[W|Y^i])^T|Y^i].\] For the minimum mean squared error (MMSE) estimator, given by the posterior mean $\widehat{W}_i = \E[W|Y^i]$, we have
\[
     \MSE_i = \Tr(\Sigma_{W|Y^i}) \geq d \abs{\Sigma_{W|Y^i}}^\frac{1}{d}
\]
where the last inequality follows from the arithmetic-geometric mean inequality (AM--GM) \cite{boyd2004convex}. This implies that a necessary condition for a low $\MSE$ is for the \emph{posterior volume}, defined here as the determinant of the posterior covariance matrix, to also be low. Unfortunately, actively selecting queries that greedily minimize posterior volume is too computationally expensive to be useful in practice since this also requires a set of ``lookahead'' posterior samples for each candidate pair and possible response, resulting in a computational complexity of $O(((C + d^2)S + d^3)M)$ to select each query from the combined cost per pair of generating samples ($O(CS)$), estimating $\Sigma_{W|Y^i}$ ($O(d^2S)$), and calculating $\abs{\Sigma_{W|Y^i}}$ ($O(d^3)$).

\subsection{Information theoretic framework}

By utilizing statistical tools from information theory, we can select queries that approximately minimize posterior volume (and hence tend to encourage low $\MSE$) at a more reasonable computationally cost. Furthermore, an information theoretic approach provides convenient analytical tools which we use to provide performance guarantees for the query selection methods we present.

Towards this end, we define the \emph{posterior entropy} as the differential entropy of the posterior distribution after $i$ queries:
\begin{equation}
    h_i(W)\equiv h(W|y^i) = -\int_w p_i(w) \log_2 (p_i(w)) dw.
\end{equation}
As we show in the following lemma, the posterior entropy of LCC distributions is both upper and lower bounded by a monotonically increasing function of posterior volume, implying that low posterior entropy is both necessary and sufficient for low posterior volume, and hence a necessary condition for low $\MSE$. The proofs of this lemma and subsequent results are provided in the supplementary material.

\begin{lemma}\label{lem:LCC_ent_Lower}
For a LCC posterior distribution $p(w|Y^i)$ in $d\ge 2$ dimensions,
where $c_d = e^2 d^2/(4\sqrt{2} (d+2))$,
\[
\frac{d}{2} \log_2\frac{2 \abs{\Sigma_{W|Y^i}}^\frac{1}{d}}{e^2 c_d} \leq h_i(W) \leq \frac{d}{2} \log_2(2\pi e \abs{\Sigma_{W|Y^i}}^\frac1d).
\]
\end{lemma}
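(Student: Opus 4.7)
The upper bound is an instance of the classical maximum-entropy principle: among all densities on $\R^d$ with covariance matrix $\Sigma_{W|Y^i}$, the Gaussian attains the largest differential entropy, equal to $\tfrac{1}{2}\log_2((2\pi e)^d |\Sigma_{W|Y^i}|) = \tfrac{d}{2}\log_2(2\pi e\,|\Sigma_{W|Y^i}|^{1/d})$. This half requires only finiteness of the second moment, not log-concavity, so it is essentially immediate.

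For the lower bound the plan is to combine two ingredients. First, from the pointwise estimate $p_i(w) \leq \|p_i\|_\infty$ one gets the elementary ``peak-density'' entropy bound
\[
h_i(W) \;=\; -\int p_i(w)\log_2 p_i(w)\,dw \;\geq\; -\log_2 \|p_i\|_\infty,
\]
which does not yet use log-concavity. Second, I would prove the log-concave maximum-density inequality
\[
\|p_i\|_\infty \;\leq\; \bigl(e^2 c_d/2\bigr)^{d/2} \,|\Sigma_{W|Y^i}|^{-1/2},
\]
whose $-\log_2$ is exactly the claimed lower bound. To establish this peak-density inequality I would first reduce to the isotropic case via the affine change of variable $w \mapsto \Sigma_{W|Y^i}^{-1/2}(w - \E[W|Y^i])$, under which log-concavity is preserved and the density rescales by the factor $|\Sigma_{W|Y^i}|^{1/2}$, turning the task into bounding the supremum of an isotropic log-concave density by a purely dimensional constant. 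This in turn can be done by combining Fradelizi's inequality $\|g\|_\infty \leq e^d g(\bar{x})$ (with $\bar x$ the centroid) with an explicit upper bound on $g(\bar x)$ derived from the moment identity $\int_K \|x\|^2 dx = \tfrac{d}{d+2} R^2 \operatorname{vol}(K)$ applied to a Euclidean ball $K$ of radius $R$, used as a comparison body against a super-level set of $g$. The $(d+2)$ and $d^2$ in $c_d$ track back to this inertia calculation (the centered second moment has to match the trace $d$ of the isotropic covariance), while the factors $e^2$ and $\sqrt{2}$ come from standard one-dimensional envelopes (e.g.\ a sub-exponential bound along radial directions and the factor of $e^d$ coming out of Fradelizi).

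The main obstacle is pinning down the precise constant $c_d = e^2 d^2/(4\sqrt 2 (d+2))$ rather than some qualitatively comparable dimension-dependent quantity. The shape of both bounds --- $h_i(W)$ sandwiched between $\tfrac{1}{2}\log_2|\Sigma_{W|Y^i}|$ plus a $d$-only offset on each side --- is classical for log-concave densities (and closely related to Bobkov--Madiman type bounds), but landing on this exact closed form will require careful bookkeeping of all normalizing factors through the isotropic reduction, the Fradelizi step, and the inertia-ellipsoid comparison (or whichever particular dominating construction is used). By contrast, the pointwise step and the maximum-entropy step are standard.
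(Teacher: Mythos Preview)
Your upper bound matches the paper exactly: it is just the Gaussian maximum-entropy bound (Theorem~8.6.5 in Cover--Thomas), stated and applied in one line.

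For the lower bound, your route and the paper's diverge. The paper does \emph{not} go through the peak-density inequality $h(W)\ge -\log_2\|p\|_\infty$ at all. Instead it symmetrizes: let $W_1,W_2$ be i.i.d.\ copies of $W$ and set $\widetilde W=W_1-W_2$, which is again log-concave (convolution) with covariance $2\Sigma_W$. It then cites Theorem~4 of Marsiglietti--Nguyen, which gives $h(\widetilde W)\ge \tfrac{d}{2}\log_2\bigl(|\Sigma_{\widetilde W}|^{1/d}/c_d\bigr)$ directly for (symmetric) log-concave vectors with exactly this $c_d$, and combines it with Bobkov--Chistyakov's inequality $h(W_1-W_2)\le h(W)+d\log_2 e$ to transfer the bound back to $W$. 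The factor $2$ in the numerator of the lemma is the doubling of covariance under symmetrization, and the extra $e^2$ in the denominator is precisely the cost of the Bobkov--Chistyakov step; no constant is derived from scratch.

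Your approach is legitimate in principle --- $h\ge -\log_2\|p\|_\infty$ plus an isotropic sup-density bound via Fradelizi and a ball comparison is a standard line --- but it forces you to manufacture the exact constant $e^2 c_d/2$ by hand, which you correctly flag as the main obstacle. The paper sidesteps that bookkeeping entirely by outsourcing the sharp constant to Marsiglietti--Nguyen and paying a clean $e^2$ penalty for the asymmetry via symmetrization. What your approach buys is self-containment and a direct argument without the auxiliary random variable $\widetilde W$; what the paper's buys is that the delicate constant tracking you worry about never has to be done.
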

This relationship between $\MSE$, posterior volume, and posterior entropy suggests a strategy of selecting queries that minimize the posterior entropy after each query. Since the actual user response is unknown at the time of query selection, we seek to minimize the \emph{expected} posterior entropy after a response is made, i.e., $\E_{Y_{i+1}}[h_{i+1}(W)|y^i]$. Using a standard result from information theory, we have $\E_{Y_{i}}[h_{i}(W)|y^{i-1}] = h_i(W) - I(W;Y_{i}|y^{i-1})$, where $I(W;Y_{i}|y^{i-1})$ is the \emph{mutual information} between the location of the unknown user point and the user response, conditioned on previous responses \cite{cover2012elements}. Examining this identity, we observe that selecting queries that minimize the expected posterior entropy is equivalent to selecting queries that maximize the mutual information between the user point and the user response, referred to here as the \emph{information gain}.

In this setting, it is generally difficult to obtain sharp performance bounds for query selection via information gain maximization. Instead, we use information theoretic tools along with Lemma \ref{lem:LCC_ent_Lower} to provide a lower bound on $\MSE$ for \emph{any} estimator and query selection scheme in a manner similar to \cite{prasad2010certain} and \cite{cover2012elements}:

\begin{theorem}
For any user point estimate given by $\widehat{W}_i$ after $i$ queries, the $\MSE$ (averaged over user points and query responses) for any selection strategy is bounded by
\[\E_{W,Y^i}\norm{W - \widehat{W}_i}_2^2 \geq \frac{d2^{-2\frac{i}{d}}}{2\pi e}.\] \label{thm:lower1}
\end{theorem}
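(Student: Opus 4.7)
The plan is to chain together three inequalities: (i) the MSE of any estimator is bounded below by that of the posterior mean, (ii) an upper bound on the conditional entropy $h(W|Y^i)$ in terms of MSE via Lemma~\ref{lem:LCC_ent_Lower} and AM--GM, and (iii) a lower bound on $h(W|Y^i)$ via a data-processing / counting argument on the binary responses. Combining (ii) and (iii) then yields a lower bound on MSE that is independent of the estimator and of the query selection policy.

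First, for any estimator $\widehat{W}_i$, orthogonality of the MMSE gives
\[
\E_{W,Y^i}\norm{W-\widehat{W}_i}_2^2 \;\geq\; \E_{Y^i}\bigl[\trace(\Sigma_{W|Y^i})\bigr],
\]
so it suffices to prove the bound for the right-hand side, which I call $\overline{\MSE}$. Next I apply the upper bound of Lemma~\ref{lem:LCC_ent_Lower} pointwise in $y^i$, noting that the posterior is LCC for every realization of the responses, and combine it with the AM--GM inequality $\abs{\Sigma_{W|Y^i}}^{1/d}\leq \trace(\Sigma_{W|Y^i})/d$ to get
\[
h(W|Y^i=y^i)\;\leq\;\frac{d}{2}\log_2\!\left(\frac{2\pi e\,\trace(\Sigma_{W|Y^i=y^i})}{d}\right).
\]
Taking expectation over $Y^i$ and using Jensen's inequality (concavity of $\log_2$) in the form $\E[\log_2 X]\leq \log_2 \E[X]$ yields
\[
h(W|Y^i)\;\leq\;\frac{d}{2}\log_2\!\left(\frac{2\pi e\,\overline{\MSE}}{d}\right).
\]

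For the lower bound on $h(W|Y^i)$, I use $h(W|Y^i)=h(W)-I(W;Y^i)$. Since $W$ is uniform on $[-\tfrac12,\tfrac12]^d$, $h(W)=0$, and because each response $Y_j$ is binary, $I(W;Y^i)\leq H(Y^i)\leq i$ bits. Therefore $h(W|Y^i)\geq -i$, and stringing the two bounds together gives
\[
-i\;\leq\;\frac{d}{2}\log_2\!\left(\frac{2\pi e\,\overline{\MSE}}{d}\right),
\]
which rearranges to $\overline{\MSE}\geq d\,2^{-2i/d}/(2\pi e)$, as claimed.

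The only subtle step is the Jensen application: I need the direction $\E\log(\cdot)\leq \log \E(\cdot)$, and I need the quantity inside the log (after AM--GM) to be the \emph{averaged} trace, which is exactly $\overline{\MSE}/d$ times constants. Everything else — the MMSE optimality, the maximum-entropy bound for Gaussians implicit in Lemma~\ref{lem:LCC_ent_Lower}, and $H(Y^i)\leq i$ — is standard. Note that the bound holds for \emph{any} active policy because $I(W;Y^i)\leq H(Y^i)\leq i$ regardless of how the pairs $(p_j,q_j)$ depend on past responses, so no assumption on the query selection rule is needed.
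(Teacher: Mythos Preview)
Your proof is correct and follows essentially the same route as the paper's: both combine the Gaussian maximum-entropy upper bound from Lemma~\ref{lem:LCC_ent_Lower}, the AM--GM inequality, Jensen's inequality, MMSE optimality, and the bound $I(W;Y^i)\le i$ from the binary responses. The only cosmetic difference is ordering: the paper applies Jensen via the concavity of $\log\abs{\cdot}$ on the positive-definite cone and then AM--GM, while you apply AM--GM pointwise first and then scalar Jensen on $\log_2$; both chains are valid and yield the same bound.
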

This result implies that the best rate of decrease in $\MSE$ one can hope for is exponential in the number of queries and slows down in a matter inversely proportional to the dimension, indicating quicker possible preference convergence in settings with lower dimensional embeddings. To estimate the information gain of a query, we can use the symmetry of mutual information to write
\begin{align}
    I(W;Y_{i}|y^{i-1}) &= H(Y_{i}|y^{i-1}) - H(Y_{i}|W,y^{i-1}) \label{eq:iteration_information}\\
    H(Y_{i}|y^{i-1}) &= -\sum_{\mathclap{Y_{i} \in \{0,1\}}} p(Y_{i}|y^{i-1}) \log_2 p(Y_{i}|y^{i-1}) \label{eq:response_entropy}\\
    H(Y_{i}|w,y^{i-1}) &= -\sum_{\mathclap{Y_{i} \in \{0,1\}}} p(Y_{i}|w) \log_2 p(Y_{i}|w)\\
    H(Y_{i}|W,y^{i-1}) &= \E_{W|y^{i-1}}[H(Y_{i}|W,y^{i-1})]. \label{eq:cond_ent}
\end{align}
Unlike the greedy $\MSE$ and posterior volume minimization strategies, information gain estimation only requires a \emph{single} batch of posterior samples at each round of query selection, which is used to estimate the discrete entropy quantities in \eqref{eq:iteration_information}--\eqref{eq:cond_ent}. \eqref{eq:iteration_information} can be estimated in $O(dS)$ operations per pair, resulting in a computational cost of $O(dSM)$ for selecting each query, which although more computationally feasible than the methods proposed so far is still likely prohibitive for highly accurate information gain estimates over a large pool of candidate pairs.

Because of these analytical and computational challenges, we develop two strategies that mimic the action of maximizing information gain while being more analytically and computationally tractable, respectively. In the next section we present our first strategy, which we analyze for more refined upper and lower bounds on the number of queries needed to shrink the posterior to a desired volume. Then we introduce a second strategy which benefits from reduced computational complexity while still remaining theoretically coupled to maximizing information gain.

\subsection{Strategy 1: Equiprobable, Max-variance}
\label{sec:EPMV}
In developing an approximation for information gain maximization, consider the scenario where \emph{arbitrary} pairs of items can be generated (unconstrained to a given dataset), resulting in a bisecting hyperplane parameterized by $(a_i,b_i)$.
In practice, such queries might correspond to the generation of synthetic items via tools such as generative adversarial networks \cite{goodfellow2014generative}. With this freedom, we could consider an \emph{equiprobable} query strategy where $b_i$ is selected so that each item in the query will be chosen by the user with probability $\frac12$. This strategy is motivated by the fact that the information gain of query $i$ is upper bounded by $H(Y_{i}|y^{i-1})$, which is maximized
if and only if the response probability is equiprobable~\cite{cover2012elements}.

To motivate the selection of query hyperplane directions, we define a query's \emph{projected variance}, denoted as $\sigma_i^2$, as the variance of the posterior marginal in the direction of a query's hyperplane, i.e.,  $\sigma_i^2 = a_i^T\Sigma_{W|y^{i-1}} a_i$. This corresponds to a measure of how far away the user point is from the hyperplane query, in expectation over the posterior distribution. With this notation, we have the following lower bound on information gain for equiprobable queries.
\begin{prop}\label{prop:milb}
For any ``equiprobable'' query scheme with noise constant $k_i$ and projected variance $\sigma_i^2$, for any choice of constant $0 \leq c \leq 1$ we have
\[
I(W;Y_i|y^{i-1}) \geq \biggl(1-h_b\Bigl(f\Bigl(\frac{c k_i \sigma_i}{2}\Bigr)\Bigr)\biggr) (1- c)
\eqqcolon L_{c,k_i}(\sigma_i)
\]
where $h_b(p) = -p\log_2 p - (1-p)\log_2(1-p)$.
\end{prop}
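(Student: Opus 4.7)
The plan is to reduce the bound to an anti-concentration estimate on the one-dimensional log-concave marginal of the posterior in the direction $a_i$. Writing $Z = a_i^T W - b_i$ and noting that the equiprobable condition $P(Y_i=0\mid y^{i-1}) = \tfrac12$ forces $H(Y_i\mid y^{i-1}) = h_b(\tfrac12) = 1$, the identities \eqref{eq:iteration_information}--\eqref{eq:cond_ent} give
\[
I(W;Y_i\mid y^{i-1}) \;=\; 1 \;-\; \E_{W|y^{i-1}}\!\bigl[h_b(f(k_i Z))\bigr],
\]
so it suffices to show that $\E[h_b(f(k_i Z))] \leq c + (1-c)\,h_b(f(ck_i\sigma_i/2))$; subtracting from $1$ and factoring then yields the stated bound exactly.

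Next I would exploit three elementary properties of $x\mapsto h_b(f(x))$: it is even, equals $1$ at $x=0$, and is monotonically decreasing in $|x|$ (since $f(x)$ moves away from $\tfrac12$ as $|x|$ grows, and $h_b$ peaks at $\tfrac12$). Splitting the expectation at the threshold $|Z|=c\sigma_i/2$, using the trivial bound $h_b(f(\cdot))\leq 1$ on $\{|Z|<c\sigma_i/2\}$ and the monotonicity bound $h_b(f(k_i Z))\leq h_b(f(ck_i\sigma_i/2))$ on $\{|Z|\geq c\sigma_i/2\}$, yields
\[
\E[h_b(f(k_i Z))] \;\leq\; P(|Z|<c\sigma_i/2) \;+\; h_b(f(ck_i\sigma_i/2))\bigl(1-P(|Z|<c\sigma_i/2)\bigr).
\]
A one-line rearrangement reduces the desired upper bound to the anti-concentration claim $P(|Z|<c\sigma_i/2)\leq c$.

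Establishing this inequality is the crux. The discussion following \eqref{eq:posterior_density} already shows that the posterior $p_i$ is log-concave on $\mathbb{R}^d$; since linear projection preserves log-concavity (Pr\'ekopa's theorem), the marginal density $f_Z$ of $Z=a_i^T W - b_i$ is log-concave on $\mathbb{R}$ with variance $\sigma_i^2 = a_i^T\Sigma_{W|y^{i-1}}a_i$. I would then invoke the sharp bound $\|f_Z\|_\infty\cdot\sigma_i \leq 1$ for one-dimensional log-concave densities (due to Bobkov, with equality attained by the shifted exponential distribution), which immediately gives
\[
P(|Z|<c\sigma_i/2) \;\leq\; c\sigma_i\cdot\|f_Z\|_\infty \;\leq\; c.
\]
Plugging this back into the split expectation finishes the proof.

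I expect the main obstacle to be pinning down the anti-concentration constant precisely: the clean form of the stated bound (with $c$ appearing undressed on the right-hand side) hinges on exactly the sharp constant $\|f_Z\|_\infty\sigma_i\leq 1$, so any slack in the log-concave anti-concentration inequality would propagate into the statement and force a weaker bound or a rescaling of the threshold. The other ingredients---the equiprobable response entropy, the monotonicity of $h_b(f(\cdot))$, and the two-piece splitting argument---are routine.
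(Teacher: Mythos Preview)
Your proposal is correct and follows essentially the same route as the paper. Both arguments reduce to the one-dimensional marginal $Z=a_i^TW-b_i$, use the equiprobable condition to write $I(W;Y_i\mid y^{i-1})=1-\E[h_b(f(k_iZ))]$, split at $|Z|=c\sigma_i/2$ via the monotonicity of $h_b(f(\cdot))$ in $|Z|$, and close with the anti-concentration estimate $P(|Z|<c\sigma_i/2)\le c$ obtained from the log-concave density bound $\|f_Z\|_\infty\sigma_i\le 1$ (which the paper records as its Lemma~\ref{lem:lccub}, citing Lov\'asz--Vempala rather than Bobkov, but it is the same sharp inequality).
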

This lower bound is monotonically increasing with $k_i \sigma_i$ and achieves a maximum information gain of 1 bit at $k_i \to \infty$ and/or $\sigma_i \to \infty$ (with an appropriate choice of $c$). This suggests choosing $a_i$ that \emph{maximize projected variance} in addition to selecting $b_i$ according to the equiprobable strategy. Together, we refer to the selection of equiprobable queries in the direction of largest projected variance as the equiprobable-max-variance scheme, or EPMV for short.

Our primary result concerns the expected number of comparisons
(or query complexity) sufficient to reduce the posterior volume below a specified threshold set a priori,
using EPMV.

\begin{theorem}\label{thm:guarantee}
For the EPMV query scheme with each selected query satisfying $k_i\norm{a_i} \!\geq\! k_\text{min}$ for some constant $k_\text{min}\!>\!0$, consider the stopping time $T_\varepsilon\!=\!\min \{i:\abs{\Sigma_{W|y^i}}^\frac{1}{d}\!< \!\varepsilon\}$ for stopping threshold $\varepsilon > 0$. For $\tau_1 = \frac{d}{2} \log_2(\frac{1}{2\pi e \varepsilon})$ and $\tau_2 = \frac{d}{2} \log_2\frac{e^2 c_d}{2 \varepsilon}$, we have
\[
\tau_1 \leq E[T_\varepsilon] \leq
\tau_2 + \frac{\tau_2+1}{l(\tau_2)} - \frac{1}{l(\tau_2)} \int_{0}^{\tau_2} l(x) dx
\]
where $l(x) =L_{c,k_\text{min}}\Bigl(\frac{2^{\frac{-x}{d}}}{\sqrt{2\pi e}}\Bigr)$ for any constant $0 \leq c \leq 1$ as defined in Proposition~\ref{prop:milb}.
Furthermore, the lower bound is true for any query selection scheme.
\end{theorem}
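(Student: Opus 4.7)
The plan is to prove the two halves separately: the lower bound $E[T_\varepsilon]\geq\tau_1$ is universal and follows from an entropy-based optional stopping argument, while the upper bound uses the EPMV-specific info-gain lower bound from Proposition~\ref{prop:milb} plus a stopping-time calculation whose final arithmetic step is the main obstacle.

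\emph{Lower bound.} Since the prior is uniform on $[-\tfrac{1}{2},\tfrac{1}{2}]^d$, $h_0(W)=0$. At the stopping time, $\abs{\Sigma_{W|y^{T_\varepsilon}}}^{1/d}<\varepsilon$ together with the upper half of Lemma~\ref{lem:LCC_ent_Lower} forces $h_{T_\varepsilon}(W)<\frac{d}{2}\log_2(2\pi e\varepsilon)=-\tau_1$ almost surely. Because each $Y_i$ is binary, $I(W;Y_i|y^{i-1})\leq H(Y_i|y^{i-1})\leq 1$, making $\{h_i(W)+i\}$ a submartingale with respect to the observation filtration. Optional stopping at $T_\varepsilon$ (after the usual truncation at $T_\varepsilon\wedge n$ and passage to the limit) gives $0=h_0(W)\leq E[h_{T_\varepsilon}(W)+T_\varepsilon]\leq -\tau_1+E[T_\varepsilon]$, hence $E[T_\varepsilon]\geq\tau_1$. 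Only Lemma~\ref{lem:LCC_ent_Lower} and the binary nature of the response enter, so the bound is automatic for any selection rule.

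\emph{Upper bound: reduction and info-gain bound.} The lower half of Lemma~\ref{lem:LCC_ent_Lower} gives $\abs{\Sigma_{W|y^i}}^{1/d}\leq(e^2 c_d/2)\cdot 2^{2h_i(W)/d}$, so $h_i(W)<-\tau_2$ suffices for $\abs{\Sigma_{W|y^i}}^{1/d}<\varepsilon$; hence $T_\varepsilon\leq T':=\min\{i:h_i(W)<-\tau_2\}$ pointwise and it is enough to bound $E[T']$. For the EPMV direction choice, $\sigma_i^2/\norm{a_i}^2=\lambda_{\max}(\Sigma_{W|y^{i-1}})\geq\abs{\Sigma_{W|y^{i-1}}}^{1/d}$ by AM--GM on the eigenvalues. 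Combining $k_i\norm{a_i}\geq k_\text{min}$ with the upper half of Lemma~\ref{lem:LCC_ent_Lower} (which converts posterior volume into entropy) yields $k_i\sigma_i\geq k_\text{min}\cdot 2^{h_{i-1}(W)/d}/\sqrt{2\pi e}$. Since $L_{c,k}(\sigma)$ depends on its parameters only through $k\sigma$ and is monotonically increasing there, Proposition~\ref{prop:milb} delivers $I(W;Y_i|y^{i-1})\geq l(-h_{i-1}(W))$.

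\emph{Upper bound: stopping-time analysis.} Set $B_i:=-h_i(W)\geq 0$. Optional stopping on the martingale $B_i-\sum_{j\leq i}I(W;Y_j|y^{j-1})$ yields $E[B_{T'}]=E[\sum_{i=1}^{T'}I(W;Y_i|y^{i-1})]\in(\tau_2,\tau_2+1]$, the upper bound coming from $B_{T'-1}\leq\tau_2$ pointwise and $I_{T'}\leq 1$. Combined with $\sum I_i\geq\sum l(B_{i-1})\geq T' l(\tau_2)$ (since $l$ is decreasing and $B_{i-1}\leq\tau_2$ for $i\leq T'$), this immediately produces the crude estimate $E[T']\leq(\tau_2+1)/l(\tau_2)$. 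To sharpen it to the claimed formula I would decompose $T'\leq\tau_2+(T'-\tau_2)^+$, bound $E[T'\wedge\tau_2]\leq\tau_2$, and argue pointwise that $(T'-\tau_2)^+ l(\tau_2)\leq\tau_2+1-B_{T'\wedge\tau_2}$ (each query past step $\tau_2$ still contributes at least $l(\tau_2)$ bits of entropy reduction before stopping). Taking expectations reduces the problem to showing $E[B_{T'\wedge\tau_2}]\geq\int_0^{\tau_2}l(x)\,dx$; I would combine the a priori bound $E[B_i]\leq i$ (from $E[I_i]\leq 1$) with convexity of $l$ and Jensen's inequality to obtain $E[l(B_{i-1})]\geq l(i-1)$, then sum and invoke a left-endpoint Riemann comparison for decreasing $l$. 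The main obstacle is this last step: one must justify convexity of $l$ on the relevant range (plausible from the logistic form of $f$ composed with the exponential $2^{-x/d}$), or replace the Jensen argument by a pathwise stochastic-dominance comparison between $B_i$ and the deterministic recurrence $b_{i+1}=b_i+l(b_i)$.
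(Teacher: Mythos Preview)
Your lower bound and the first half of the upper bound (reduction to the entropic stopping time $T'$, the inequality $k_i\sigma_i\geq k_{\min}2^{h_{i-1}(W)/d}/\sqrt{2\pi e}$, and hence $I(W;Y_i\mid y^{i-1})\geq l(-h_{i-1}(W))$) are correct and coincide with the paper's argument; the crude bound $E[T']\leq(\tau_2+1)/l(\tau_2)$ also follows. The gap is in your refinement to the integral term. First, the inequality $(T'-\tau_2)^+l(\tau_2)\leq \tau_2+1-B_{T'\wedge\tau_2}$ is not pointwise: $I_i\geq l(B_{i-1})$ holds only in conditional expectation, so you must obtain the expectation version via optional stopping, not ``pointwise.'' More seriously, your final step requires convexity of $l$, and this fails in general: writing $l(x)=(1-c)\bigl(1-h_b(f(u(x)))\bigr)$ with $u(x)=A\,2^{-x/d}$, one computes that $1-h_b\circ f$ is convex on $(0,u_\ast)$ but concave on $(u_\ast,\infty)$ (with $u_\ast\approx 1.54$ solving $u\tanh(u/2)=1$), so $l$ need not be convex on $[0,\tau_2]$ for all parameter choices. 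The stochastic-dominance alternative you mention would also require work you have not supplied.

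The paper avoids convexity entirely by a Burnashev-style nested-threshold recursion: for a mesh $\tau_j=j\Delta$ with stopping times $T_j=\min\{i:Z_i\geq\tau_j\}$ (where $Z_i=-h_i(W)$), optional stopping for the submartingale $Z_i/l(\tau_j)-i$ between $T_{j-1}$ and $T_j$ gives a recursion on $E[Z_{T_j}/l(\tau_j)-T_j]$; unrolling it, feeding in the universal lower bound $E[T_j]\geq\tau_j$, and sending $\Delta\to 0$ turns the telescoping sum into $\int_0^{\tau_2}l(x)\,dx$. This uses only that $l$ is decreasing. To make optional stopping rigorous the paper also proves a separate technical lemma bounding the one-step entropy increments, $|h(W\mid y^i)-h(W\mid y^{i-1})|\leq\gamma(d)$, which is needed to verify uniform integrability of the stopped submartingale; your proposal does not address this verification at all.
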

This result follows from a martingale
stopping-time analysis of the entropy
at each query. Our next theorem presents a looser upper bound, but is more
easily interpretable.

\begin{theorem}\label{thm:stoppingorder}
The EPMV scheme, under the same
assumptions as in Theorem~\ref{thm:guarantee},
satisfies
\[
\E[T_\varepsilon] = O\left(d\log\frac{1}{\varepsilon} + \left(\frac{1}{\varepsilon k_\text{min}^2}\right) d^2\log\frac{1}{\varepsilon}\right).
\]
Furthermore, for any query scheme,
$
\E[T_\varepsilon] = \Omega\left(d \log \frac{1}{\varepsilon}\right).
$
\end{theorem}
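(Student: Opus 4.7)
The plan is to start from the quantitative bound of Theorem~\ref{thm:guarantee} and convert it to the claimed asymptotic form. The lower bound will be essentially immediate: Theorem~\ref{thm:guarantee} gives $\E[T_\varepsilon] \geq \tau_1 = \frac{d}{2}\log_2\frac{1}{2\pi e \varepsilon}$, which is $\Omega(d\log(1/\varepsilon))$ once $\varepsilon$ is below an absolute constant (and the bound is vacuous otherwise).

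For the upper bound I would first discard the subtracted term $\frac{1}{l(\tau_2)}\int_0^{\tau_2} l(x)\,dx$ appearing in Theorem~\ref{thm:guarantee}, which is non-negative since $l \geq 0$; dropping it preserves the upper bound and leaves $\E[T_\varepsilon] \leq \tau_2 + (\tau_2+1)/l(\tau_2)$. Expanding $\tau_2 = \frac{d}{2}\log_2\frac{e^2 c_d}{2\varepsilon}$ and using $c_d = \Theta(d)$ yields $\tau_2 = O(d\log(1/\varepsilon))$ in the small-$\varepsilon$ regime (the extra $d\log d$ contribution being absorbed into the second summand), which is exactly the first summand in the claim. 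The bulk of the work is then in lower-bounding $l(\tau_2)$. Setting $\sigma(x) = 2^{-x/d}/\sqrt{2\pi e}$ as in the definition of $l$, a direct computation gives $\sigma(\tau_2)^2 = \Theta(\varepsilon/d)$, so the argument $u := c k_\text{min}\sigma(\tau_2)/2$ of $f$ has magnitude $O(k_\text{min}\sqrt{\varepsilon/d})$, which is small for $\varepsilon$ below a suitable threshold. Taylor expansion near zero, using $f(0)=1/2$, $f'(0)=1/4$, and $h_b''(1/2) = -4/\ln 2$, gives $1-h_b(f(u)) = \Theta(u^2)$. Fixing any constant $c \in (0,1)$ then yields $l(\tau_2) = \Omega(k_\text{min}^2 \varepsilon/d)$, and hence $(\tau_2+1)/l(\tau_2) = O\!\left(d^2 \log(1/\varepsilon)/(k_\text{min}^2 \varepsilon)\right)$, the second summand.

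The principal obstacle is making the Taylor estimate uniform: one needs $1 - h_b(f(u)) \geq C u^2$ with a universal positive constant $C$ throughout the range of $u$ that actually arises, rather than merely asymptotically as $u \to 0$. This is handled by observing that $u \mapsto (1-h_b(f(u)))/u^2$ extends continuously to a strictly positive function on any bounded interval around zero, and hence is bounded below there by a positive constant; once $\varepsilon$ is below a fixed threshold the argument $u$ stays in such an interval, and above the threshold the claimed big-$O$ bound is trivial. No machinery beyond Theorem~\ref{thm:guarantee} and elementary calculus is required.
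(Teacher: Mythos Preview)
Your proposal is correct and follows essentially the same route as the paper: invoke Theorem~\ref{thm:guarantee}, simplify the upper bound to $(\tau_2+1)/l(\tau_2)$ plus lower-order terms, and then show $l(\tau_2) = \Omega(k_\text{min}^2\varepsilon/d)$ via the quadratic behavior of $1-h_b(f(u))$ near $u=0$. The paper differs in two minor execution details: (i) instead of simply discarding the non-negative integral, it observes that $l$ is monotone decreasing so $\int_0^{\tau_2} l(x)\,dx \ge \tau_2\, l(\tau_2)$, which absorbs the standalone $\tau_2$ term as well; and (ii) instead of a Taylor-plus-compactness argument, it uses the explicit elementary bounds $h_b(p)\le 2\sqrt{p(1-p)}$ and $\sech(x)\le 2/(2+x^2)$ to obtain the closed-form lower bound $L_{c,k}(\sigma)\ge (1-c)\,c^2k^2\sigma^2/(32+c^2k^2\sigma^2)$, valid for all $\sigma$, avoiding any case split on the size of $u$.
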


This result has a favorable dependence on the dimension $d$,
and the upper bound can be interpreted as a blend between two rates, one of which matches that of the generic lower bound.
The second term in the upper bound provides some evidence that our ability to recover
$w$ worsens
as $k_\text{min}$ decreases.
This is intuitively unsurprising
since small $k_\text{min}$ corresponds to the case where queries
are very noisy. We hypothesize that the absence of such a penalty term in the lower bound is an artifact of our analysis, since increasing noise levels (i.e., decreasing $k_\text{min}$) should limit achievable performance by any querying strategy.
On the other hand, for asymptotically
large $k_i$, we have the following corollary:

\begin{corollary}\label{cor:optimality}
In the noiseless setting ($k_\text{min} \to \infty$), EPMV has optimal expected stopping time complexity for posterior volume stopping.
\end{corollary}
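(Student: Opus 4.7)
The plan is to derive this as a direct consequence of Theorem~\ref{thm:stoppingorder}, which already gives us both a scheme-dependent upper bound and a universal lower bound on $\E[T_\varepsilon]$. The key observation is that the two terms in the upper bound have different dependencies on $k_\text{min}$: the first term $d\log\frac{1}{\varepsilon}$ is noise-free, while the second term $\frac{1}{\varepsilon k_\text{min}^2} d^2 \log\frac{1}{\varepsilon}$ carries the entire dependence on the noise constant. So my first step would be to fix $\varepsilon>0$ and $d$ and let $k_\text{min}\to\infty$, noting that the noise-dependent term is $O(k_\text{min}^{-2})\to 0$, so that the EPMV upper bound collapses to
\[
\E[T_\varepsilon] = O\!\left(d\log\frac{1}{\varepsilon}\right).
\]

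Next, I would invoke the universal lower bound from the same theorem, $\E[T_\varepsilon] = \Omega(d\log\frac{1}{\varepsilon})$, which holds for \emph{any} query scheme (noiseless or not) and follows ultimately from the information-theoretic bound in Theorem~\ref{thm:lower1} combined with the upper half of Lemma~\ref{lem:LCC_ent_Lower} relating posterior entropy to posterior volume. Since this lower bound does not depend on $k_\text{min}$, it persists in the noiseless limit. Matching orders gives $\E[T_\varepsilon] = \Theta(d\log\frac{1}{\varepsilon})$ for EPMV, so EPMV attains the optimal order among all query selection schemes for volume-based stopping.

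The only point I would treat carefully is the legitimacy of taking $k_\text{min}\to\infty$ while keeping the hypotheses of Theorem~\ref{thm:guarantee} in force, in particular the requirement $k_i\|a_i\|\ge k_\text{min}$ for all selected queries. Since increasing $k_\text{min}$ only strengthens the per-query signal-to-noise ratio, the hypothesis remains valid (trivially so in the noiseless limit where $k_i\to\infty$), and the bound of Theorem~\ref{thm:stoppingorder} continues to hold. I do not anticipate a real obstacle here; the main subtlety is simply that the statement is about the noiseless asymptotic regime, so the order of quantifiers (fix $d,\varepsilon$, then take $k_\text{min}\to\infty$, then compare asymptotic orders in $\varepsilon$ and $d$) should be spelled out explicitly to avoid any ambiguity about whether the $O(\cdot)$ constants depend on $k_\text{min}$.
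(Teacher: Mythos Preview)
Your proposal is correct and follows exactly the same route as the paper: invoke Theorem~\ref{thm:stoppingorder}, let $k_\text{min}\to\infty$ so that the $\frac{1}{\varepsilon k_\text{min}^2}d^2\log\frac{1}{\varepsilon}$ term vanishes, and match the resulting $O(d\log\frac{1}{\varepsilon})$ upper bound against the universal $\Omega(d\log\frac{1}{\varepsilon})$ lower bound. Your additional remarks about the order of quantifiers and the persistence of the hypothesis $k_i\|a_i\|\ge k_\text{min}$ are more careful than the paper's one-line proof, but not strictly needed at this level of rigor.
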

\begin{proof}
When $k_\text{min} \to \infty$, from Theorem \ref{thm:stoppingorder} $\E[T_\varepsilon] = O\left(d\log\frac{1}{\varepsilon}\right)$; for any scheme, $\E[T_\varepsilon] = \Omega\left(d \log \frac{1}{\varepsilon}\right)$.
\end{proof}
Taken together, these results suggest that EPMV is optimal with respect to posterior volume minimization up to a penalty term which decreases to zero for large noise constants. While low posterior volume is only a necessary condition for low $\MSE$, this result could be strengthened to an upper bound on $\MSE$ by bounding the condition number of the posterior covariance matrix, which is left to future work. Yet, as we empirically demonstrate in Section \ref{sec:sim}, in practice our methods are very successful in reducing $\MSE$.

While EPMV was derived under the assumption of arbitrary hyperplane queries, depending on the application we may have to select a pair from a fixed pool of items in a given dataset. For this purpose we propose a metric for any candidate pair that, when maximized over all pairs in a pool, approximates the behavior of EPMV. For a pair with items $p$ and $q$ in item pool $\mathcal{P}$, let $a_{pq} = 2(p-q)$ and $b_{pq}=\norm{p}^2 - \norm{q}^2$ denote the weights and threshold parameterizing the bisecting hyperplane. We choose a pair that maximizes the utility function (for some $\lambda > 0$)
\begin{gather}
    \eta_1(p,q;\lambda) = k_{pq}\sqrt{a_{pq}^T
    \Sigma_{W|Y^{i-1}} a_{pq}}-\lambda
    \Bigl|\widehat{p}_1 - \frac12 \Bigr|  \label{eq:equimv}
    \\
    \widehat{p}_1 = P(Y_i\!=\!1|Y^{i-1})= \E_{W|Y^{i-1}}[f(k_{pq}(a_{pq}^TW - b_{pq}))].  \notag
\end{gather}
This has the effect of selecting queries which are close to equiprobable
and align with the direction of largest variance, weighted by $k_{pq}$ to prefer higher fidelity queries. While $\Sigma_{W|Y^{i-1}}$ can be estimated once from a batch of posterior samples, $\widehat{p}_1$ must be estimated for each candidate pair in $O(dS)$ operations, resulting in a computational cost of $O(dSM)$ which is on the same order as directly maximizing information gain. For this reason, we develop a second strategy that approximates EPMV while significantly reducing the computational cost.

\subsection{Strategy 2: Mean-cut, Max-variance}
\label{sec:MCMV}
Our second strategy is a \emph{mean-cut} strategy where $b_i$ is selected such that the query hyperplane passes through the posterior mean, i.e.\ $a_i^T \E[W|Y^{i-1}] - b_i = 0$. For such a strategy, we have the following proposition:

\begin{prop}\label{prop:meancut_MI_lower}
For mean-cut queries with noise constant $k_i$ and projected variance $\sigma_i^2$ we have
\[
\Big\vert p(Y_{i}|y^{i-1}) - \frac{1}{2}\Big\vert \leq \frac{e-2}{2e} +\frac{\ln 2}{k_i
    \sigma_i}
\]
\[\text{and,}\,\,
I(W;Y_i|y^{i-1}) \geq h_b\Biggl(\frac{1}{e}-\frac{\ln 2}{k_i
    \sigma_i}
\Biggr) -\frac{\pi^2 (\log_2 e)}{3k_i
    \sigma_i
    }.
\]
\label{prop:meancut_infobounds}
\end{prop}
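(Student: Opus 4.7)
The plan is to reduce both inequalities to two classical facts about one-dimensional log-concave densities, applied to the scalar $Z := k_i(a_i^T W - b_i)$. Because $p_{i-1}(w)$ is log-concave and $a_i^T w - b_i$ is affine, the conditional law of $Z$ given $y^{i-1}$ is log-concave on $\mathbb{R}$; the mean-cut condition forces $\E[Z\mid y^{i-1}]=0$, while $\text{Var}(Z\mid y^{i-1}) = k_i^2\sigma_i^2$. The two inputs I would invoke are: (i) any one-dimensional log-concave density satisfies $\|p_Z\|_\infty \leq 1/\sqrt{\text{Var}(Z)} = 1/(k_i\sigma_i)$, tight for the shifted exponential; and (ii) a Gr\"unbaum-type half-space bound stating that for a log-concave variable every half-line through the mean carries mass in $[1/e,\,1-1/e]$, so $F_Z(0)\in[1/e,\,1-1/e]$.

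For the first inequality, writing $p := P(Y_i=1\mid y^{i-1}) = \E[f(Z)]$ and using $f(Z)-1/2 = \int_0^Z f'(t)\,dt$, Fubini combined with the evenness $f'(-t)=f'(t)$ yields
\[
p-\tfrac{1}{2} \;=\; \int_0^\infty f'(t)\,h(t)\,dt,\qquad h(t) := 1 - F_Z(t) - F_Z(-t).
\]
I would then split $p - 1/2 = \tfrac{1}{2}h(0) + \int_0^\infty f'(t)\bigl(h(t)-h(0)\bigr)\,dt$ using $\int_0^\infty f'(t)\,dt = 1/2$. Fact (ii) gives $|h(0)| = |1-2F_Z(0)| \leq 1 - 2/e$, contributing the $(e-2)/(2e)$ term. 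Writing $h(t)-h(0) = \int_{-t}^0 p_Z - \int_0^t p_Z$ as a \emph{difference} of two nonnegative quantities each bounded by $t\|p_Z\|_\infty$, fact (i) yields $|h(t)-h(0)|\leq t/(k_i\sigma_i)$ --- not $2t/(k_i\sigma_i)$, and extracting this sharper bound is what produces the constant $\ln 2$ rather than $2\ln 2$. The standard logistic calculation $\int_0^\infty t\,f'(t)\,dt = \ln 2$ then delivers the stated bound.

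For the mutual-information inequality I would use $I(W;Y_i\mid y^{i-1}) = h_b(p) - \E[h_b(f(Z))]$. The first inequality places $p$ in $[1/e - \ln 2/(k_i\sigma_i),\,1-1/e + \ln 2/(k_i\sigma_i)]$, so monotonicity of $h_b$ on $[0,1/2]$ and its symmetry about $1/2$ give $h_b(p) \geq h_b\bigl(1/e - \ln 2/(k_i\sigma_i)\bigr)$. For the conditional entropy,
\[
\E[h_b(f(Z))] \;=\; \int h_b(f(z))\,p_Z(z)\,dz \;\leq\; \|p_Z\|_\infty \int_{-\infty}^\infty h_b(f(z))\,dz.
\]
The substitution $u=f(z)$ (so $dz = du/(u(1-u))$) reduces the last integral to $\int_0^1 h_b(u)/(u(1-u))\,du$; applying the Euler identity $\int_0^1 -\ln u/(1-u)\,du = \pi^2/6$ to each of its two terms evaluates it to exactly $\pi^2 \log_2 e/3$. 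Combined with fact (i), this yields the penalty $\pi^2 \log_2 e/(3 k_i\sigma_i)$ and closes the bound.

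Once facts (i) and (ii) are in hand the remainder is bookkeeping, so I expect the main obstacle to be invoking the two log-concave inequalities with the correct sharp constants, and then keeping the sign structure of $h(t)-h(0)$ intact so that the first bound carries the constant $\ln 2$ rather than $2\ln 2$. Recognizing the closed-form value $\pi^2\log_2 e/3$ for $\int h_b(f(z))\,dz$ --- which is exactly what aligns the conditional-entropy estimate with the stated constant --- is the only non-routine calculation.
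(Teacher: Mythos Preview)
Your proposal is correct and reaches the stated bounds with the right constants. The two ingredients you identify---the sup-norm bound $\|p_Z\|_\infty\le 1/\sqrt{\mathrm{Var}(Z)}$ for one-dimensional log-concave densities and the Gr\"unbaum half-space bound $F_Z(0)\in[1/e,1-1/e]$ for zero-mean log-concave variables---are exactly what the paper uses as well, and your evaluation $\int_{\mathbb R} h_b(f(z))\,dz=\pi^2(\log_2 e)/3$ matches the paper's computation.

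Where you diverge from the paper is in the first inequality. The paper simply splits $\E[f(X)]=\int_{x\le 0}f(x)p_X(x)\,dx+\int_{x>0}f(x)p_X(x)\,dx$, applies $p_X\le 1/\sigma_X$ to the negative half (integrating $f$ explicitly to produce the $\ln 2/(k\sigma)$ term) and $f\le 1$ together with $P(X>0)\le 1-1/e$ to the positive half; the lower bound is obtained symmetrically. Your Fubini representation $p-\tfrac12=\int_0^\infty f'(t)h(t)\,dt$ followed by the split at $h(0)$ is a different decomposition that happens to land on the identical constants; the observation that $h(t)-h(0)$ is a \emph{difference} of two nonnegative quantities each bounded by $t\|p_Z\|_\infty$, so that $|h(t)-h(0)|\le t\|p_Z\|_\infty$ rather than $2t\|p_Z\|_\infty$, is the step that recovers $\ln 2$ instead of $2\ln 2$. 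The paper's argument is shorter and more elementary; yours is a bit more structural and makes the role of the logistic derivative $f'$ (and the identity $\int_0^\infty t f'(t)\,dt=\ln 2$) explicit. For the mutual-information bound the two arguments are essentially the same, differing only in whether one works with $X$ or with the rescaled $Z=kX$.
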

For large projected variances, we observe that $\abs{p(Y_{i}|y^{i-1}) - \frac{1}{2}} \lessapprox 0.14$, suggesting that mean-cut queries are somewhat of an approximation to equiprobable queries in this setting. Furthermore, notice that the lower bound to information gain in Proposition \ref{prop:meancut_infobounds} is a monotonically increasing function of the projected variance. As $\sigma_i \to \infty$, this bound approaches $h_b(1/e) \approx 0.95$ which is nearly sharp since a query's information gain is upper bounded by 1 bit. This implies some correspondence between maximizing a query's information gain and maximizing the projected variance, as was the case in EPMV. Hence, our second strategy selects \emph{mean-cut, maximum variance} queries (referred to as MCMV) and serves as an approximation to EPMV while still maximizing a lower bound on information gain.

\begin{algorithm}[tbp]
\caption{Pairwise search with noisy comparisons}
\label{algo:utility}
\begin{algorithmic}
	\REQUIRE item set $\mathcal{X}$, parameters $S$, $\beta$, $\lambda$
	\STATE $\mathcal{P} \leftarrow$ set of all pairwise queries from items in $\mathcal{X}$
    \STATE $\widetilde{W_0}, \mu_0, \Sigma_0 \leftarrow$ initialize from
    samples of prior
    \FOR{$i = 1$ \textbf{to} $T$}
        \STATE $\mathcal{P_\beta} \leftarrow$ uniformly downsample $\mathcal{P}$ at rate $0 < \beta \leq 1$
        \STATE \textbf{InfoGain:}~$ \displaystyle p_i,q_i \leftarrow \argmax_{p,q \in \mathcal{P}_\beta} \, \eta_0(p,q; \widetilde{W}_{i-1})$
        \STATE \textbf{EPMV:}~$ \displaystyle p_i,q_i \leftarrow \mathop{\argmax}_{p,q \in \mathcal{P}_\beta} \,
                \eta_1(p,q;\lambda, \widetilde{W}_{i-1})$
        \STATE \textbf{MCMV:}~$ \displaystyle p_i,q_i \leftarrow \argmax_{p,q \in \mathcal{P}_\beta} \,
                \eta_2(p,q; \lambda, \mu_{i-1}, \Sigma_{i-1})$
        \STATE $y_i \leftarrow $ PairedComparison$(p_i,q_i)$
        , $y^i \leftarrow y_i\cup y^{i-1}$.
        \STATE $\widetilde{W_i} \leftarrow$ batch of $S$ samples from
            posterior $W|Y^{i}$
        \STATE $\mu_i, \Sigma_i \leftarrow \operatorname{Mean}(\widetilde{W}_i),
        \operatorname{Covariance}(\widetilde{W}_i)$
        \STATE $\widehat{W}_i \leftarrow \mu_i$
    \ENDFOR
    \ENSURE user point estimate $\widehat{W}_T$
\end{algorithmic}
\end{algorithm}

For implementing MCMV over a fixed pool of pairs (rather than arbitrary hyperplanes), we calculate the orthogonal distance of each pair's hyperplane to the posterior mean as $\abs{a_{pq}^T\E[W|Y^{i-1}] - b_{pq}}/\norm{a_{pq}}_2$ and the projected variance as $a_{pq}^T \Sigma_{W|Y^{i-1}} a_{pq}$. We choose a pair that maximizes the following function which is a tradeoff (tuned by $\lambda > 0$) between minimizing distance to the posterior mean, maximizing noise constant, and maximizing projected variance:
\begin{equation}
    \eta_2(p,q;\lambda) = k_{pq}\sqrt{a_{pq}^T \Sigma_{W|Y^{i-1}} a_{pq}}
      - \lambda \frac{\abs{a_{pq}^T\E[W|Y^{i-1}] - b_{pq}}}{\norm{a_{pq}}_2}. \label{eq:MCMV_tradeoff}
\end{equation}
This strategy is attractive from a computational standpoint since the posterior mean $\E[W|Y^{i-1}]$ and covariance $\Sigma_{W|Y^{i-1}}$ can be estimated \emph{once} in $O(d^2 S)$ computations, and subsequent calculation of the hyperplane distance from mean and projected variance requires only $O(d^2)$ computations per pair. Overall, this implementation of the MCMV strategy has a computational complexity of $O(d^2(S + M))$, which scales more favorably than both the information gain maximization and EPMV strategies.

We unify the information gain (referred to as InfoGain), EPMV, and MCMV query selection methods under a single framework described in Algorithm~\ref{algo:utility}. At each round of querying, a pair is selected that maximizes a utility function $\eta(p,q)$ over a randomly downsampled pool of candidates pairs, with $\eta_0(p,q) \equiv I(W;Y_{i}|y^{i-1})$ for InfoGain and $\eta_1$ from \eqref{eq:equimv} and $\eta_2$ from \eqref{eq:MCMV_tradeoff} denoting the utility functions of EPMV and MCMV, respectively. We include a batch of posterior samples denoted by $\widetilde{W}$ as an input to $\eta_0$ and $\eta_1$ to emphasize their dependence on posterior sampling, and add mean and covariance inputs to $\eta_2$ since once these are estimated, MCMV requires no additional samples to select pairs. For all methods, we estimate the user point as the mean of the sample batch since this is the MMSE estimator.

\section{Results}
\label{sec:sim}
To evaluate our approach, we constructed a realistic embedding (from a set of training user-response triplets)
consisting of multidimensional item points and simulated our pairwise search methods over randomly generated preference points and user responses\footnote{Code available at \url{https://github.com/siplab-gt/pairsearch}}. We constructed an item embedding
of the Yummly \texttt{Food-10k} dataset
of~\cite{wilber2015learning,wilber2014cost},
consisting of 958,479 publicly available triplet comparisons assessing relative similarity among 10,000 food items.
The item coordinates are derived from the crowdsourced
triplets using the popular probabilistic multidimensional scaling algorithm of~\cite{tamuz_adaptively_2011}
and the implementation obtained
from the NEXT project\footnote{\url{http://nextml.org}}.

\subsection{Methods comparison}
We compare InfoGain, EPMV, and MCMV as described in Algorithm \ref{algo:utility} against several baseline methods: \\
\textbf{Random:} pairs are selected uniformly at random and user preferences are estimated as the posterior mean. \\
\textbf{GaussCloud-$\mathbf{Q}$:} pairs are chosen to approximate a Gaussian point cloud around the preference estimate that shrinks dyadically over $Q$ stages, as detailed in \cite{massimino2018you}. \\
\textbf{ActRank-$\mathbf{Q}$:} pairs are selected that intersect a feasible region of preference points and queried $Q$ times; a majority vote is then taken to determine a single response, which is used with the pair hyperplane to further constrain the feasible set \cite{jamieson2011active}. Since the original goal of the algorithm was to rank embedding items rather than estimate a continuous preference point, it does not include a preference estimation procedure; in our implementation we estimate user preference as the Chebyshev center of the feasible region since it is the deepest point in the set and is simple to compute \cite{boyd2004convex}.

In each simulation trial, we generate
a point $W$ uniformly at random from the hypercube $[-1,1]^d$ and collect paired comparisons using
the item points in our embedding according to the methods described above. The response probability
of each observation follows \eqref{eq:response} (referred to herein as the ``logistic'' model),
using each of the three schemes for choosing $k_{pq}$
described in \eqref{eq:k1}--\eqref{eq:k3}.
In each scheme we optimized the value of $k_0$ over the set of training triplets via maximum-likelihood estimation according to the logistic model. We use the Stan Modeling Language~\cite{carpenter2017stan}
to generate posterior samples when required, since our model is LCC and therefore is particularly
amenable to Markov chain Monte Carlo methods~\cite{brooks2011handbook}.

Note that unlike GaussCloud-$Q$ and ActRank-$Q$, the Random, InfoGain, EPMV, and MCMV methods directly exploit a user response model in the selection of pairs and estimation of preference points, which can be advantageous when a good model of user responses is available. Below we empirically test each method in this \emph{matched} scenario, where the noise type (logistic) and the model for $k_{pq}$ (e.g., ``constant'', ``normalized'', or ``decaying'') are revealed to the algorithms. We also test a \emph{mismatched} scenario by generating response noise according to a non-logistic response model while the methods above continue to calculate the posterior as if the responses were logistic. Specifically, we generate responses according to a ``Gaussian'' model
\[y_i = \operatorname{sign}(k_{pq}(a_i^T w - b_i) + Z)\qquad Z\sim \mathcal{N}(0,1)\]
where $k_0$ \emph{and} the model for $k_{pq}$ are selected using maximum-likelihood estimation on the training triplets.

\begin{figure*}[htbp]
    \def\fr{0.5}
    \def\fh{1.77}
    \centering
    \begin{subfigure}[t]{0.45\textwidth}
        \centering
        \includegraphics[height=\fh in]{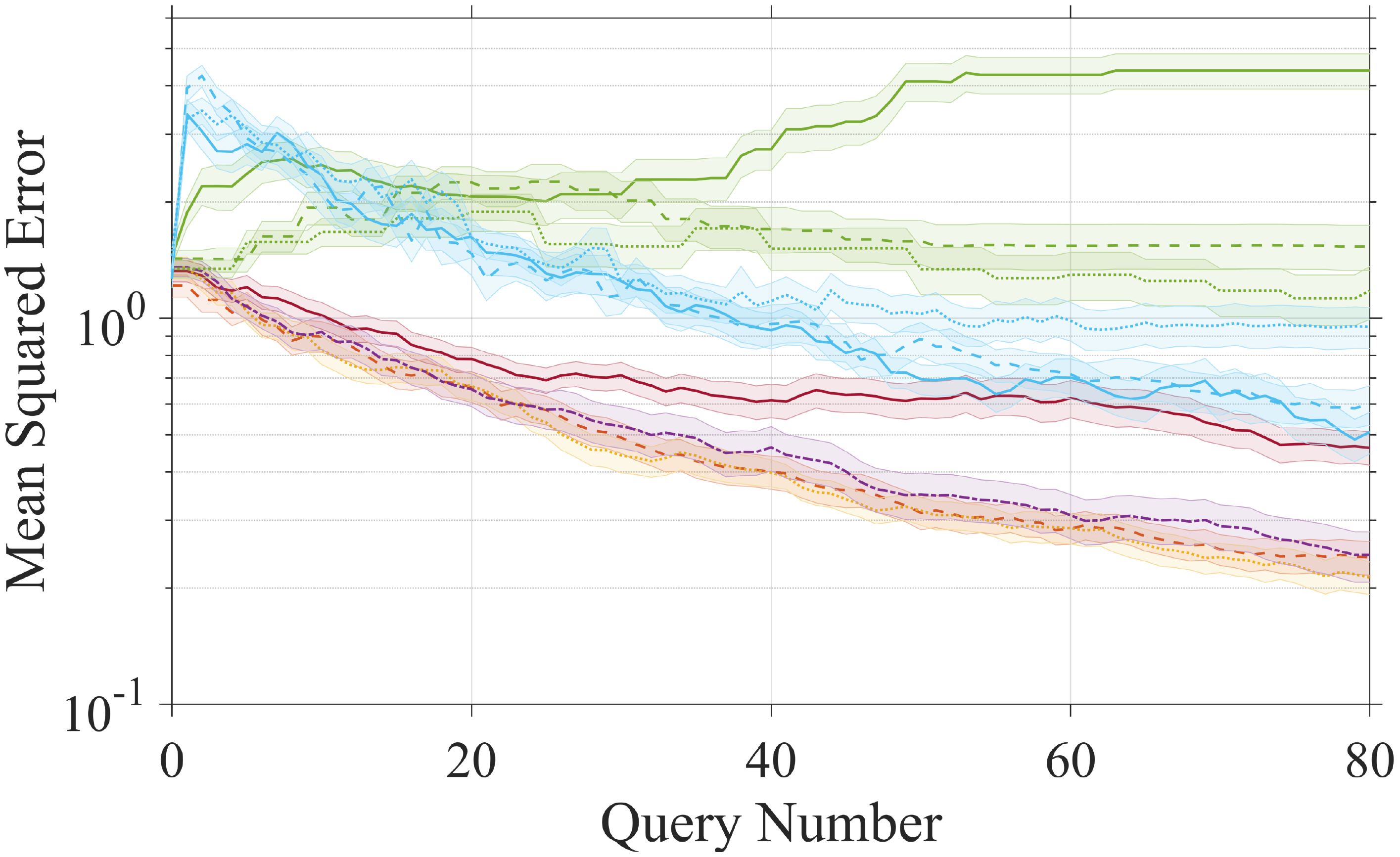}
        \caption{Estimation error: matched\\logistic noise, $d = 4$}
        \label{fig:matched_MSE}
    \end{subfigure}%
    \begin{subfigure}[t]{0.55\textwidth}
        \centering
        \includegraphics[height=\fh in]{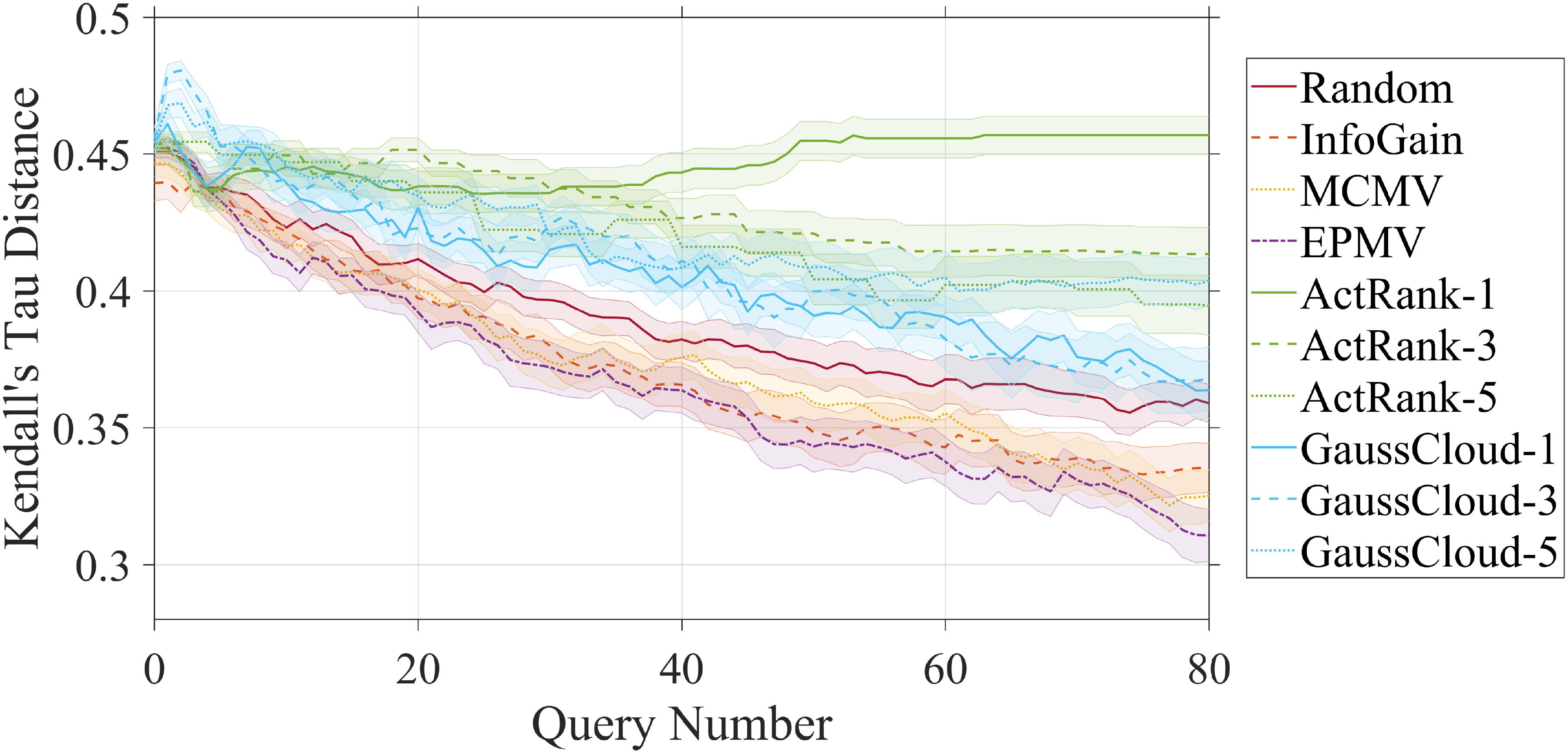}
        \caption{Ranking performance: matched\\logistic noise, $d = 4$}
        \label{fig:matched_KT}
    \end{subfigure}\\
    \begin{subfigure}[t]{0.45\textwidth}
        \centering
        \includegraphics[height=\fh in]{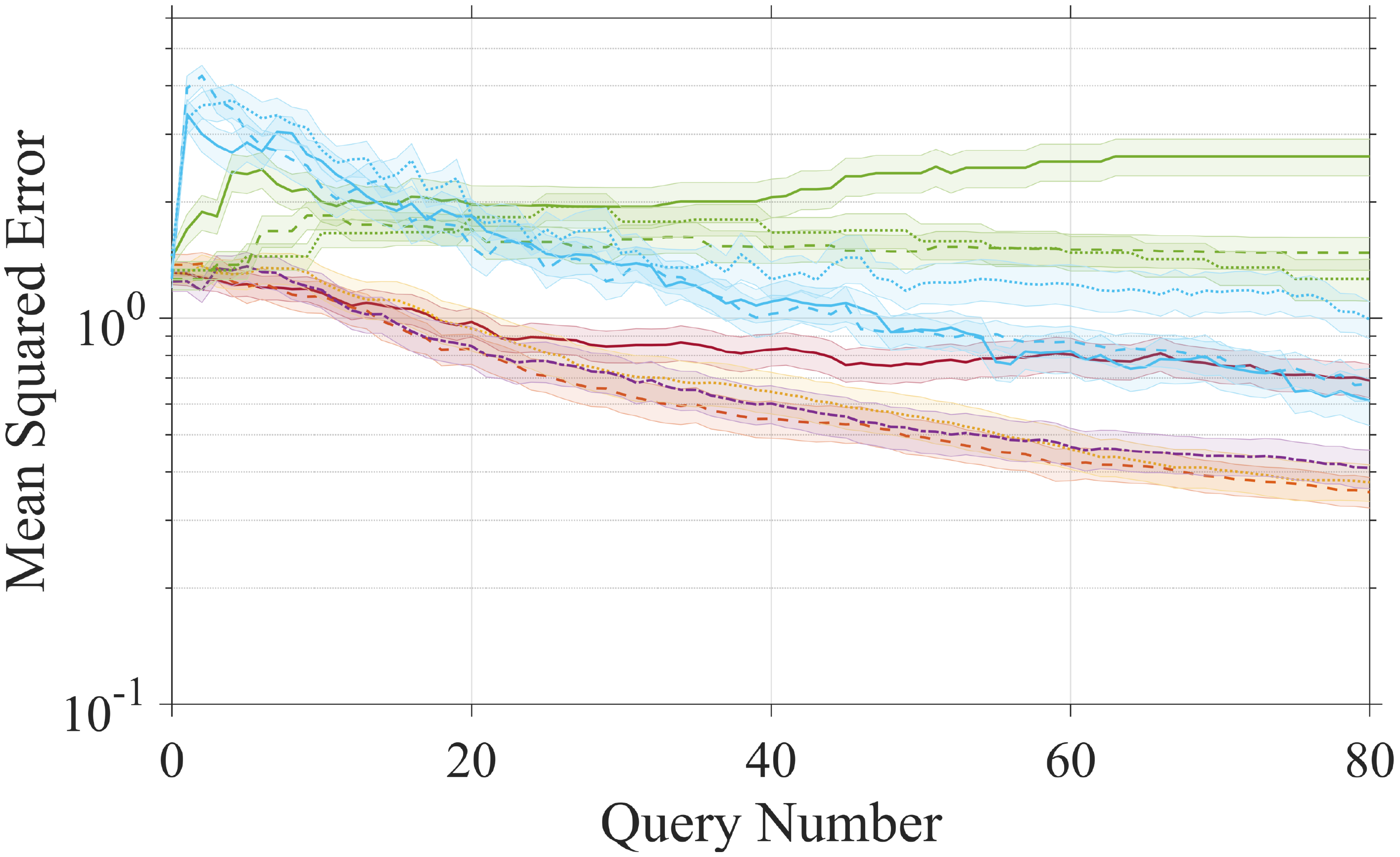}
        \caption{Estimation error: mismatched\\Gaussian noise, $d = 4$}
        \label{fig:mismatched_MSE}
    \end{subfigure}%
    \begin{subfigure}[t]{0.55\textwidth}
        \centering
        \includegraphics[height=\fh in]{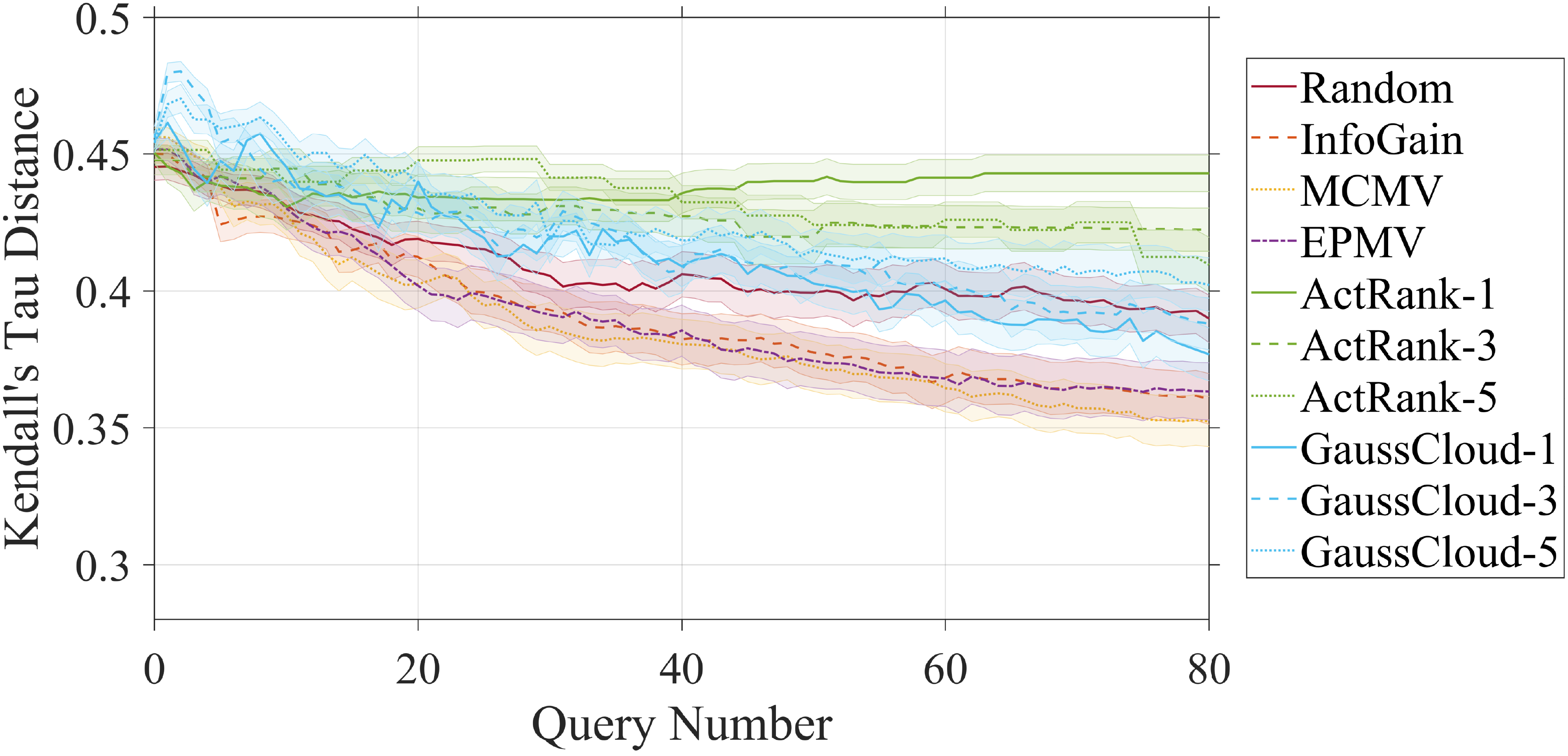}
        \caption{Ranking performance: mismatched\\Gaussian noise, $d = 4$}
        \label{fig:mismatched_KT}
    \end{subfigure}
    \caption{Performance evaluation over 80 simulated search queries, averaged over 50 trials per method and plotted with $\pm$ one standard error. (Left Column) $\MSE$. (Right Column) for each trial, a batch of 15 items was uniformly sampled without replacement from the dataset, and the normalized Kendall's Tau distance (lower distance is better) was calculated between a ranking of these items by distance to the ground truth preference point and a ranking by distance to the estimated point. To get an unbiased estimate, this metric was averaged over 1000 batches per trial, and error bars calculated with respect to the number of trials. (Top Row) ``normalized'' logistic model with matching noise in $d$ = 4. (Bottom Row) ``decaying'' logistic model with mismatched Gaussian ``normalized'' noise in $d$ = 4. Additional plots testing a wider selection of parameters are available in the supplement. Overall, our new strategies (EPMV, MCMV) outperform existing methods and also perform comparably to information gain maximization (InfoGain), which they were designed to approximate.}
    \label{fig:results}
\end{figure*}

\subsection{Mean squared error evaluation}
The left column of Figure~\ref{fig:results} plots the $\MSE$ of each method's estimate with respect to the ground truth location over the course of a pairwise search run. In the matched model case of Figure~\ref{fig:matched_MSE}, our strategies outperform Random, ActRank-$Q$, and GaussCloud-$Q$ for multiple values of $Q$ by a substantial margin. Furthermore, both of our strategies performed similarity to InfoGain, corroborating their design as information maximization approximations. Note that Random outperforms the other baseline methods, supporting the use of Bayesian estimation in this setting (separately from the task of active query selection). Although mismatched noise results in decreased performance overall in Figure~\ref{fig:mismatched_MSE}, the same relative trends between the methods as in Figure~\ref{fig:matched_MSE} are evident.

\subsection{Item ranking evaluation}
We also consider each method's performance with respect to ranking embedding items in relation to a preference point. For each trial, a random set of 15 items is sampled from the embedding without replacement and ranked according to their distance to a user point estimate. This ranking is compared to the ground truth ranking produced by the true user point by calculating a normalized Kendall's Tau distance, which is 0 for identical rankings and 1 for completely discordant rankings \cite{jamieson2011active}. This metric measures performance in the context of a recommender system type task (a common application of preference learning) rather than solely measuring preference estimation error. This metric is depicted in the right column of Figure~\ref{fig:results}, for the matched model case in \ref{fig:matched_KT} and mismatched case in \ref{fig:mismatched_KT}. The same trends as observed in $\MSE$ analysis occur, with our strategies performing similarly to InfoGain and outperforming all other methods. This is a particularly noteworthy result in that our method produces more accurate rankings than ActRank-$Q$, which to our knowledge is the state-of-the-art method in active embedding ranking.

\subsection{Discussion}

Our simulations demonstrate that both InfoGain approximation methods, EPMV and
MCMV, significantly outperform the state-of-the-art techniques in active preference estimation in the context of low-dimensional item embeddings with noisy user responses, and perform similarity to InfoGain, the method they were designed to approximate. This is true even when generating noise according to a different model than the one used for Bayesian estimation. These empirical results support the theoretical connections between EPMV, MCMV, and InfoGain presented in this study, and suggest that the posterior volume reduction properties of EPMV may in fact allow for $\MSE$ reduction guarantees.

These results also highlight the attractiveness of MCMV, which proved to be a top performer in embedding preference learning yet is computationally efficient and simple to implement. This technique may also find utility as a subsampling strategy in supervised learning settings with implicit pairwise feedback, such as in \cite{wu2017large}. Furthermore, although in this work pairs were drawn from a fixed
embedding, MCMV is easily adaptable to
continuous item spaces that allow for
generative construction of new items to compare.
This is possible in some applications,
such as facial composite generation for criminal
cases~\cite{frowd2011evolving}
or in evaluating foods and beverages,
where we might be able to generate nearly
arbitrary stimuli based on the ratios of ingredients~\cite{ventura2011sugar}.

\section*{Acknowledgements}
We thank the reviewers for their useful feedback and comments, as well as colleagues for insightful discussions including Matthieu Bloch, Yao Xie, Justin Romberg, Stefano Fenu, Marissa Connor, and John Lee. This work is supported by NSF grants CCF-1350954 and CCF-1350616, ONR grant N00014-15-1-2619, and a gift from the Alfred P. Sloan Foundation.

\bibliographystyle{IEEEtran}
\bibliography{refs}

\clearpage
\appendix
\section{Supplementary material}

First, we begin with an additional lemma:
\begin{lemma}\label{lem:lccub}
Let $X_i$ be a marginal distribution of $W$.
The density of $X_i$ is then
\[
    p_{X_i|y^{i}}(x) = \frac{1}{\sigma_i}p_{Z_i}\left(\frac{X_i-\E[X_i|y^i]}{\sigma_i}\right) \leq \frac{1}{\sigma_i},
\]
where
$\sigma_i = \sqrt{\E[(X_i - \E[X_i|y^{i}])^2|y^i]}$
and
$Z_i = \frac{X_i-\E[X_i|y^i]}{\sigma_i}$.
\end{lemma}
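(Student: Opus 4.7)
The plan is to split the claim into a change-of-variables identity and a classical bound for one-dimensional log-concave densities, proving each in turn.

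First, I would observe that $X_i \mid y^i$ is log-concave. The joint posterior $p(w \mid y^i)$ is log-concave, as established following \eqref{eq:posterior_density}; and any marginal of a log-concave density on $\R^d$ is itself log-concave on $\R$ (a consequence of the Pr\'ekopa--Leindler inequality; see \cite{saumard2014log}). Hence $X_i \mid y^i$ is log-concave, and so is its standardization $Z_i$, which has mean zero and variance one by construction. The displayed equality then follows from a standard change of variables: with $\mu_i = \E[X_i \mid y^i]$, the affine map $x \mapsto (x - \mu_i)/\sigma_i$ has Jacobian $1/\sigma_i$, so $p_{Z_i}(z) = \sigma_i\, p_{X_i \mid y^i}(\mu_i + \sigma_i z)$, which rearranges to the stated identity.

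Second, I would prove $p_{Z_i}(z) \leq 1$. This reduces to the classical fact that any one-dimensional log-concave density $g$ with standard deviation $\sigma$ satisfies $\norm{g}_\infty \leq 1/\sigma$, specialized here to $\sigma = 1$. To prove this fact, translate so that the mode $M = \norm{g}_\infty$ is attained at the origin. Log-concavity forces $g$ to be non-increasing on each side of its mode, and among all such densities with a given mass $p_+$ on $[0,\infty)$ and fixed value $M$ at $0$, the maximum second moment is achieved by the exponential $M e^{-Mx/p_+}$ (a consequence of the convexity of $-\log g$: any shape decaying faster than this exponential places less mass at large $x$, while a slower decay violates log-concavity once the normalization is enforced). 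A direct calculation combining the two sides yields the bound $\sigma^2 \leq (1 - 2 p_+ p_-)/M^2 \leq 1/M^2$ with $p_- = 1 - p_+$, equality holding only for the (translated) one-sided exponential. Hence $M\sigma \leq 1$.

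The main obstacle is this last step---identifying the one-sided exponential as the variance extremizer among log-concave densities with fixed mode value and carrying out the comparison. While not difficult, it is the substantive content of the lemma; the first items (log-concavity of the marginal and the Jacobian computation) are routine. The inequality $\norm{g}_\infty \leq 1/\sigma$ can alternatively be invoked directly from the log-concavity literature, for instance \cite{saumard2014log}.
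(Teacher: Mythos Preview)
Your approach matches the paper's exactly: the marginal inherits log-concavity, the equality is a change of variables, and the inequality is the classical fact that a one-dimensional isotropic log-concave density is bounded by $1$, which the paper simply cites from \cite{lovasz2007geometry}. Your additional self-contained sketch of this last bound has a small gap---maximizing the \emph{second moment about the mode} on each half-line by the exponential yields only $\sigma^2 \le E[(X-\text{mode})^2] \le (2-6p_+p_-)/M^2 \le 2/M^2$, not the sharper $(1-2p_+p_-)/M^2$ you claim, since variance is centered at the mean rather than the mode---so invoking the literature (as you also propose, and as the paper does) is the clean route.
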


\begin{proof}
Since $X_i$ is a marginal of a log-concave distribution,
$X_i$ is also log-concave. Furthermore,
$Z_i$ is a zero-mean, unit-variance (i.e., isotropic) log-concave random variable with density $p_{Z_i}(z)$.
Then Lemma~\ref{lem:lccub}
follows because one-dimensional isotropic log-concave
densities are upper bounded by one~\cite{lovasz2007geometry}.
\end{proof}

A direct consequence of Lemma~\ref{lem:lccub}
is that for any $a > 0$,
\begin{align*}
    P(\abs{X_i} < a\,|\,y^{i}) = \int_{-a}^a p_{X_i|y^{i}}(x) dx
    \leq \frac{1}{\sigma_i} \int_{-a}^a dx
    \leq \frac{2a}{\sigma_i},
\end{align*}
implying that
\begin{equation}
    P(\abs{X_i} \geq a\,|\,y^{i}) \geq 1- \frac{2a}{\sigma_i}.
    \label{eq:tail_lower}
\end{equation}

\subsection{Proof of Lemma \ref{lem:LCC_ent_Lower}}

\begin{proof}
Letting $\Sigma_W$ denote the $d\times d$ covariance matrix
of random vector $W\in\R^d$, from Theorem~8.6.5 in~\cite{cover2012elements},
we have the upper bound
\begin{equation}
    h(W) \le \frac12 \log_2((2\pi e)^d|\Sigma_W|).
    \label{eq:ent_upper}
\end{equation}
Now assume the distribution $P_W$ of $W$ is log-concave, let
$W_1,W_2\sim P_W$ be i.i.d.\ and let $\widetilde W:= W_1 - W_2$.
Let $p_{\widetilde W}$ and
$p_W$ denote the respective densities of $\widetilde W$ and $W$.
We have by Proposition~3.5 of~\cite{saumard2014log},
for all $z\in\R^d$,
\begin{equation}
    p_{\widetilde W}(z) = p_W(z) \star p_W(-z),
    \label{eq:lem21a}
\end{equation}
where $\star$ is the convolution operator,
is also log-concave.
Since covariances add for independent random vectors,
$\Sigma_{\widetilde W} = 2\Sigma_W$.

By Theorem~4 of~\cite{marsiglietti2018lower}, for $d\ge 2$
\[
    h(\widetilde W) \ge \frac d2 \log_2
    \frac{|\Sigma_{\widetilde{W}}|^{1/d}}{c(d)},
\]
where $c(d) = e^2d^2/(4\sqrt2(d+2))$.
From Corollary~2.3 of~\cite{bobkov2013problem},
\[
    h(\widetilde W) = h(W_1-W_2) \le h(W) + d \log_2 e,
\]
which implies
\begin{align}
    h(W) \ge h(\widetilde W) - d \log_2 e
    &\ge \frac d2 \log_2 \frac{|\Sigma_{\widetilde W}|^{1/d}}{c(d)} - d \log_2 e
    \ge \frac d2 \log_2\frac{|2\Sigma_W|^{1/d}}{e^2 c(d)}
    \label{eq:lem21b}
\end{align}

The result follows combining \eqref{eq:ent_upper} and \eqref{eq:lem21b}.
\end{proof}

\subsection{Proof of Theorem \ref{thm:lower1}}
\begin{align}
\E_{Y^i}[h_i(W)] = h_0(W) - \sum_{j=1}^i I(W;Y_j|Y^{j-1})
\geq - i
\end{align}
from the chain rule for mutual information with $h_0(W)=0$ and $I(W;Y_j|Y^{j-1}) \leq 1$~\cite{cover2012elements}, and
\begin{align}
\E_{Y^i}[h_i(W)] &\leq \frac{1}{2} \E_{Y^i}\log_2((2\pi e)^d \abs{\Sigma_{W|Y^i}})
\\ &\leq \frac{1}{2} \log_2((2\pi e)^d \abs{\E_{Y^i}\Sigma_{W|Y^i}})
\end{align}
from Lemma \eqref{lem:LCC_ent_Lower} with Jensen's inequality and the concavity of $\log\abs{A}$ for any matrix $A$ in the positive definite cone~\cite{boyd2004convex}. Rearranging, we have
\begin{align}
    \frac{2^{-2i}}{(2 \pi e)^d} &\leq \abs{\E_{Y^i}\Sigma_{W|Y^i}}
    \leq \frac{\Tr\left({\E_{Y^i}[\Sigma_{W|y^i}]}\right)^d}{d^d} \label{eq:AMGM}\\
    &= \frac{(\E_{W,Y^i}[\norm{W - \E[W|Y^i]}_2^2])^d}{d^d}\label{eq:lin_exp_tr}\\
    &\leq \frac{(\E_{W,Y^i}[\norm{W - \widehat{W}_i}_2^2])^d}{d^d}
\end{align}
where \eqref{eq:AMGM} is from the AM--GM inequality, \eqref{eq:lin_exp_tr} is due to the linearity of trace and expectation, and the last inequality is due to that fact that expected value is the MMSE estimator, from which the $\MSE$ lower bound follows.

\subsection{Proof of Proposition \ref{prop:milb}}

\begin{proof}
Consider the `equiprobable' query scheme, with $P(Y_i=1|y^{i-1})=\frac{1}{2}$ for hyperplane query given by weights $a_i$, threshold $\tau_i$, and noise constant $k$. Letting $X_i = a_i^T W - \tau_i$, we have
\begin{align*}
    I(W;Y_i|y^{i-1}) &= H(Y_i|y^{i-1}) - H(Y_i|y^{i-1},W)\\
    &= H(Y_i|y^{i-1}) - H(Y_i|y^{i-1},W,X_i)\intertext{since $X_i$ is a deterministic function of $W$}
    &= H(Y_i|y^{i-1}) - H(Y_i|y^{i-1},X_i) \intertext{since $p(Y_i|y^{i-1},W,X_i) = p(Y_i|y^{i-1},X_i)$}
    &= I(X_i;Y_i|y^{i-1}).
\end{align*}
Revisiting mutual information, we have
\begin{align}
    I(X_i;Y_i|y^{i-1}) &= \E\left[\log_2 \frac{p(Y_i|X_i,y^{i-1})}{p(Y_i|y^{i-1})}\right]\\
    &=E_{X_i} [(1-h_b(f(k X_i)))\,|\,y^{i-1}]
    \\
    &= E_{X_i}[(1-h_b(f(k \abs{X_i})))|y^{i-1}]\intertext{since $1-h_b(f(k X_i))$ is symmetric. From Markov's inequality with $1-h_b(f(k \abs{X_i}))$ being monotonically increasing, for any $a > 0$,}
    &\geq (1-h_b(f(k a))) P(\abs{X} > a\,|\,y^{i-1})\\
    \text{(from \eqref{eq:tail_lower}) }
    &\geq (1-h_b(f(k a))) \left(1- \frac{2a}{\sigma_i}\right) \\
    &= \left(1-h_b\left(f\left(\frac{k c \sigma_i}{2}\right)\right)\right) \left(1- c\right)
\end{align} by letting $a = \frac{c \sigma_i}{2}$ for any $0 \leq c \leq 1$
\end{proof}

\subsection{Proof of Theorem~\ref{thm:guarantee}}
\paragraph{Entropy properties:} Let $h(W|y^i)$ denote the posterior entropy after observing $i$ queries. With a uniform prior distribution over the hypercube $[-\frac{1}{2},\frac{1}{2}]$, we have that $h(W|y^0) = 0$ and $h(W|y^i) \leq 0$ for $\forall i$ since the uniform distribution maximizes entropy over this bounded space.

After query $i$, let the eigenvalues of the posterior covariance matrix be denoted in decreasing order as $\lambda_1 \geq \lambda_2 \dots \geq \lambda_d$. In the equiprobable, max-variance scheme, query $a_i$ is in the direction of maximal eigenvector, so the product of the noise constant and query standard deviation at iteration $i$ is given by $k \sqrt{a_i^T\Sigma_{W|y^i} a_i} =  k \norm{a_i} \sqrt{\lambda_1} \geq k_\text{min} \sqrt{\lambda_1}$. From the monotonicity of the mutual information lower bound on equiprobable queries, we have
\begin{equation}
    I(W;Y_i|y^{i-1}) \geq L_{c,k_\text{min}}(\sqrt{\lambda_1})
\end{equation}

From rearranging terms in Lemma \ref{lem:LCC_ent_Lower} along with $\abs{\Sigma_{W|y^i}}=\prod_{i=1}^d \lambda_i$, we have
\begin{align}
    \frac{2^{2h(W|y^i)}}{(2\pi e)^d} &\leq \abs{\Sigma_{W|y^i}}=\prod_{i=1}^d \lambda_i \leq \lambda_1^d\\
    \implies \lambda_1 &\geq \frac{2^{\frac{2h(W|y^i)}{d}}}{2\pi e}
\end{align}

For compactness of notation, let
\begin{equation}
    \tilde{L}_{c,k_\text{min}}(h) = L_{c,k_\text{min}}\left(\frac{2^{\frac{h}{d}}}{\sqrt{2\pi e}}\right)
\end{equation}
Since $L_{c,k_\text{min}}$ is monotonically increasing, we have
\begin{equation}
    I(W;Y_i|y^{i-1}) \geq \tilde{L}_{c,k_\text{min}}(h(W|y^i))
\end{equation}

Combined with the 1 bit upper bound on mutual information along with $I(W;Y_i|y^{i-1}) = h(W|y^{i-1}) - \E_{Y_{i}|y^{i-1}}[h(W|y^{i})]$, we have
\begin{align}
    h(W|y^{i-1}) - 1 &\leq \E_{Y_{i}|y^{i-1}}[h(W|y^{i})]
    \label{eq:ent_squeeze}
    \\ & \leq h(W|y^{i-1}) - \tilde{L}_{c,k_\text{min}}(h(W|y^{i-1}))
    \notag
\end{align}

To bound the entropy deviations from one measurement to the next, we need the following lemma:

\begin{lemma}\label{lem:technical}
For the equiprobable query scheme,
\[
\abs{h(W|y^{i}) - h(W|y^{i-1})} \leq \gamma(d)\quad\forall i\geq 0
\] where $\gamma(d) = 8d+\frac{d}{2}\log_2{(2\pi e d)}+1$.
\end{lemma}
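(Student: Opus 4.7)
The plan is to split $|h(W|y^i) - h(W|y^{i-1})| \leq \gamma(d)$ into its two directional bounds, establish the upper direction directly via a covariance comparison together with Lemma~\ref{lem:LCC_ent_Lower}, and then bootstrap the lower direction from a balance argument particular to the equiprobable scheme.

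For the upper direction, I would first note that the equiprobable condition $p(y_i|y^{i-1}) = 1/2$ combined with the likelihood bound $p(y_i|w) \leq 1$ yields the pointwise density ratio $p(w|y^i) \leq 2 p(w|y^{i-1})$ by Bayes' rule. Using this to compare covariances, for any direction $v$,
\[
v^T \Sigma_{W|y^i} v \leq \int (v^T(w-\mu_{i-1}))^2 p(w|y^i)\,dw \leq 2 \int (v^T(w-\mu_{i-1}))^2 p(w|y^{i-1})\,dw = 2 v^T \Sigma_{W|y^{i-1}} v,
\]
where the first step uses that $\mu_i$ minimizes expected squared deviation under $p(\cdot|y^i)$. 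Hence $|\Sigma_{W|y^i}|^{1/d} \leq 2|\Sigma_{W|y^{i-1}}|^{1/d}$. Combining the upper LCC entropy bound applied to $p(W|y^i)$ with the lower LCC bound applied to $p(W|y^{i-1})$ from Lemma~\ref{lem:LCC_ent_Lower} then gives $h(W|y^i) - h(W|y^{i-1}) \leq \frac{d}{2}\log_2(2\pi e^3 c_d)$. Plugging in $c_d = e^2 d^2/(4\sqrt 2(d+2)) \leq e^2 d/4$ (valid for $d \geq 2$) reduces this to at most $\frac{d}{2}\log_2 d + 4d$ after consolidating numerical constants.

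For the lower direction, I would invoke the standard chain-rule identity $\E_{Y_i|y^{i-1}}[h(W|y^i)] = h(W|y^{i-1}) - I(W;Y_i|y^{i-1}) \geq h(W|y^{i-1}) - 1$. Under the equiprobable scheme $p(Y_i=0|y^{i-1}) = p(Y_i=1|y^{i-1}) = 1/2$, this becomes
\[
h(W|Y_i=0, y^{i-1}) + h(W|Y_i=1, y^{i-1}) \geq 2h(W|y^{i-1}) - 2.
\]
Since the upper-direction bound from the previous paragraph applies equally to whichever realization did \emph{not} occur, rearranging yields $h(W|y^i) \geq h(W|y^{i-1}) - 2 - \bigl(\tfrac{d}{2}\log_2 d + 4d\bigr)$. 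Combining both directions gives $|h(W|y^i) - h(W|y^{i-1})| \leq \frac{d}{2}\log_2 d + 4d + 2$, which is at most $\gamma(d) = 8d + \frac{d}{2}\log_2(2\pi e d) + 1$ since $\frac{d}{2}\log_2(2\pi e) + 4d \geq 1$ for all $d \geq 1$.

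The main obstacle is that a single realization of $y_i$ can concentrate the posterior arbitrarily, as a sharp likelihood at a surprising response can shrink posterior volume by a factor unrelated to $k_{\min}$, so entropy \emph{decreases} cannot be controlled directly by the covariance argument. The density-ratio bound $p(w|y^i) \leq 2 p(w|y^{i-1})$ only controls spreading, not contraction. The key insight is that equiprobable queries couple the two possible realizations: both cannot simultaneously concentrate, since the total information gain is at most one bit, so the upper-side control automatically implies lower-side control via the balance inequality above.
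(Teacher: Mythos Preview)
Your argument is correct and in fact yields a tighter constant than $\gamma(d)$, but the route is genuinely different from the paper's. The paper establishes the \emph{opposite} one-sided inequality first: it bounds the entropy \emph{decrease} $h(W|y^{i-1}) - h(W|y^i)$ directly, by combining the Jensen-type bound $-h(W|y^i) \leq \log_2 \sup_w p(w|y^i)$ with an external sup-bound on isotropic log-concave densities ($p_V(v) \leq 2^{8d} d^{d/2}$), then pushes this through Bayes' rule and the Gaussian entropy upper bound on $p(W|y^{i-1})$. Only afterward does it recover the entropy-\emph{increase} bound via the equiprobable averaging argument. You instead bound the entropy \emph{increase} directly via the covariance domination $\Sigma_{W|y^i} \preceq 2\Sigma_{W|y^{i-1}}$ and Lemma~\ref{lem:LCC_ent_Lower}, and then bootstrap the decrease side from balance.

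What this buys: your route is more self-contained, using only Lemma~\ref{lem:LCC_ent_Lower} rather than the separate isotropic mode bound, and it produces the sharper constant $\tfrac{d}{2}\log_2 d + 4d + 2$. The paper's route, on the other hand, handles the conceptually harder direction (posterior collapse) head-on, and its sup-density argument would extend to non-equiprobable schemes with only a $-\log_2 p(y_i|y^{i-1})$ penalty, whereas your covariance comparison genuinely needs $p(y_i|y^{i-1}) = 1/2$ at the density-ratio step. Your closing observation about why the balance argument is essential---that the density-ratio bound controls spreading but not contraction---correctly identifies why neither approach can avoid leaning on the equiprobable coupling for one of the two sides.
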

The proof of Lemma~\ref{lem:technical} is highly technical
and so we relegate it to the end of the supplementary materials.

\paragraph{Martingale properties:}
We note our martingale argument is similar in style to
\cite{burnashev1974interval}.
Let $Z_i = -h(W|y^i)$. From the previous section we have $Z_0 = 0,~Z_i\geq0~\forall i\geq 0$, $\abs{Z_{i}-Z_{i-1}} \leq \gamma(d)$
from Lemma~\ref{lem:technical},
and $Z_{i-1} + \tilde{L}_{c,k_\text{min}}(-Z_{i-1}) \leq E_{Z_{i}|y^{i-1}}[Z_{i}] \leq Z_{i-1}+1$. Since $Z_{i-1}$ is a deterministic function of $y^{i-1}~\forall i$ along with the law of total expectation,
\begin{align*}
    \E[Z_{i}|Z_0,\dots,Z_{i-1}] &= \E_{Y^{i-1}|Z_0,\dots,Z_{i-1}}\E[Z_{i}|Z_0,\dots,Z_{i-1},y^{i-1}]\\
    &= \E_{Y^{i-1}|Z_0,\dots,Z_{i-1}}\E[Z_{i}|y^{i-1}]
\end{align*}
which implies
\begin{align*}
    \E[Z_{i}|Z^{i-1}] &\geq \E_{Y^{i-1}|Z_0,\dots,Z_{i-1}}[Z_{i-1} + \tilde{L}_{c,k_\text{min}}(-Z_{i-1})]\\
    &=Z_{i-1} + \tilde{L}_{c,k_\text{min}}(-Z_{i-1})
\end{align*}
and
\begin{align*}
    \E[Z_{i}|Z_0,\dots,Z_{i-1}] &\leq\E_{Y^{i-1}|Z_0,\dots,Z_{i-1}}[Z_{i-1} + 1]\\
    &=Z_{i-1} + 1
\end{align*}

Since $\tilde{L}_{c,k_\text{min}}(-Z_{i-1}) > 0$, we have $\E[Z_{i}|Z^{i-1}] \geq Z_{i-1}$. For all $i \geq 0$, $\abs{Z_{i}} < \infty$ since $\abs{Z_{i}} = \abs{Z_0 + \sum_{j=1}^{i} Z_{j}-Z_{j-1}} \leq \sum_{j=1}^{i} \abs{Z_{j}-Z_{j-1}} \leq i \gamma(d) < \infty$. Therefore, $Z_i$ is a submartingale.

Let $\tau > 0$ define a stopping threshold and corresponding stopping time $T=\min \{i:Z_i \geq \tau\}$ Considering $\E[Z_{i}|Z^{i-1}] \leq Z_{i-1} + 1$ and taking the expectation over $Z^{i-1}$ on both sides and expanding with the tower rule, we have
\begin{align*}
    \E[\E[Z_{i}|Z^{i-1}]] &\leq \E[Z_{i-1}] + 1\\
    \E[Z_{i}] &\leq \E\E[Z_{i-1}|Z^{i-2}] + 1\\
    \E[Z_{i}] &\leq \E[Z_{i-2}] + 1 + 1\\
    &\dots\\
    \E[Z_{i}] &\leq i
\end{align*}
which implies
\begin{equation*}
    T \geq \E[Z_T] \geq \tau
\end{equation*}
where the last inequality follows by definition, so $\E[T] \geq \tau$. Note that this is true for \emph{any} query selection scheme since mutual information is always upper bounded by 1 bit.

To lower bound the expected stopping time, observe $\tilde{L}_{c,k_\text{min}}(-z)$ is monotonically decreasing in $z$, and $Z_i \leq \tau$ for $i < T$, so we have in this range that $\tilde{L}_{c,k_\text{min}}(-Z_i) > \tilde{L}_{c,k_\text{min}}(-\tau)$. Using this fact, we construct a separate submartingale that equals $Z_i$ up to and including the stopping time and has the same properties listed above. Specifically, let
\begin{equation}
    U_i = \begin{cases}
    Z_i&i\leq T\\
    U_{i-1}+\tilde{L}_{c,k_\text{min}}(-\tau) & i> T.
    \end{cases}
\end{equation}

Clearly for $i \leq T$, $U_i = Z_i$, and if $T_U$ is defined as $T_U = \min \{i:U_i \geq \tau\}$, by observation $T_U = T$. $U_i$ also satisfies $\abs{U_{i}-U_{i-1}}<\gamma(d)$, and $U_{i-1} + \tilde{L}_{c,k_\text{min}}(-\tau) \leq E[U_{i}|U^{i-1}] \leq U_{i-1}+1$.

We have
\begin{align}
    E[U_{i}|U^{i-1}] \geq U_{i-1} + \tilde{L}_{c,k_\text{min}}(-\tau)\\
    \frac{E[U_{i}|U^{i-1}]}{\tilde{L}_{c,k_\text{min}}(-\tau)} \geq \frac{U_{i-1}}{\tilde{L}_{c,k_\text{min}}(-\tau)} + 1\\
    \frac{E[U_{i}|U^{i-1}]}{\tilde{L}_{c,k_\text{min}}(-\tau)} - i\geq \frac{U_{i-1}}{\tilde{L}_{c,k_\text{min}}(-\tau)} - (i-1)
\end{align}

We then have a submartingle given by $U^{(\text{sub})}_i = \frac{U_i}{\tilde{L}_{c,k_\text{min}}(-\tau)} - i$. 

Assume for the time being that the optional stopping theorem can be applied to this submartingale (proved in the sequel)---for any stopping time $S$ satisfying $S \leq T$, $\E[U^\text{sub}_S] \leq \E[U^\text{sub}_T]$. Specifically, if $\tau_S$ is a stopping threshold satisfying $\tau_S \leq \tau$ such that $S = \min\{i:U_i \geq \tau_S\}$, then (for brevity, letting $l(u) = \tilde{L}_{c,k_\text{min}}(-u)$)
\begin{equation}
    \frac{\E[U_S]}{l(\tau)}- \E[S] \leq \frac{\E[U_T]}{l(\tau)}- \E[T]
\end{equation}
which implies
\begin{align}
    \frac{\E[U_S]}{l(\tau_S)}- \E[S] &= \frac{l(\tau)}{l(\tau_S)}\left[\frac{\E[U_S]}{l(\tau)}- \E[S]\right] -
    \left(1 - \frac{l(\tau)}{l(\tau_S)}\right)\E[S]
    \\ &\leq \frac{l(\tau)}{l(\tau_S)}\left[\frac{\E[U_T]}{l(\tau)}- \E[T]\right] - \left(1 - \frac{l(\tau)}{l(\tau_S)}\right)\E[S]
\end{align}
More generally, let $\Delta > 0$ be given and set stopping threshold $\tau_i = i \Delta$, with corresponding stopping time $T_i$. Define $P_i = \frac{U_{T_i}}{l(\tau_i)}-T_i$. Letting $r_i = \frac{l(\tau_i)}{l(\tau_{i-1})}$ and letting $T = T_i$ and $S=T_{i-1}$, by rearranging the above we have
\begin{equation}
    \E[P_i] \geq \frac{\E[P_{i-1}]}{r_i}+\frac{\left(1-r_i\right)}{r_i}\E[T_{i-1}]
\end{equation}
Noting that $\E[T_0]=0$ since a threshold of $\tau_0$ results in stopping at $T_0 = 0$ and $E[P_0]=\frac{U_{T_0}}{l(\tau_0)}-\E[T_0]=0$, we continue this bound recursively
\begin{align*}
\E[P_i] &\geq \frac{\E[P_{i-2}]}{r_i r_{i-1}}+\frac{\left(1-r_{i-1}\right)}{r_i r_{i-1}}\E[T_{i-2}]
+\frac{\left(1-r_i\right)}{r_i}\E[T_{i-1}]
\dots
\\ &= \sum_{j=1}^{i-1} \frac{1-r_{j+1}}{\prod_{k=j+1}^i r_k}\E[T_{j}]\\
&= \sum_{j=1}^{i-1} \frac{l(\tau_j)-l(\tau_{j+1})}{l(\tau_i)}\E[T_{j}]\intertext{since $\prod_{k=j+1}^i r_k = \frac{l(\tau_i)}{l(\tau_{i-1})}\frac{l(\tau_{i-1})}{l(\tau_{i-2})}\dots\frac{l(\tau_{j+1})}{l(\tau_{j})}=\frac{l(\tau_i)}{l(\tau_{j})}$}
&= \frac{1}{l(\tau_i)}\sum_{j=1}^{i-1} \frac{l(j \Delta)-l(j \Delta + \Delta)}{\Delta}\Delta\E[T_{j}]\\
&\geq \frac{1}{l(\tau_i)}\sum_{j=1}^{i-1} \frac{l(\tau_j)-l(\tau_j + \Delta)}{\Delta}\tau_j \Delta \intertext{since $\E[T_j] \geq \tau_j = j \Delta$. Now let $\tau > 0$ be given (with corresponding stopping time defined as $T$) and let $\Delta \to 0$, choosing $i$ appropriately such that $\tau = \tau_i = i \Delta$}
&\geq -\frac{1}{l(\tau)}\int_{0}^{\tau} \biggl(\frac{d}{dx}l(x) \biggr) x dx\\
&=\frac{1}{l(\tau)} \int_{0}^{\tau} l(x) dx - \tau\\
&\implies \frac{\E[U_{T}]}{l(\tau)}-\E[T] \geq \frac{1}{l(\tau)} \int_{0}^{\tau} l(x) dx - \tau\\
\implies \E[T] &\leq \tau + \frac{\E[U_{T}]}{l(\tau)} - \frac{1}{l(\tau)} \int_{0}^{\tau} l(x) dx\\
&\leq \tau + \frac{\tau+1}{l(\tau)} - \frac{1}{l(\tau)} \int_{0}^{\tau} l(x) dx\intertext{since $\E[U_{T}]=\E[\E[U_{T}|U^{T-1}]]\leq \E[U_{T-1}] + 1 \leq \tau + 1$.}
\end{align*}

All together we have
\begin{equation}
    \tau \leq \E[T] \leq \tau + \frac{\tau+1}{l(\tau)} - \frac{1}{l(\tau)} \int_{0}^{\tau} l(x) dx \label{eq:martingale_stopping_upper_lower}
\end{equation}

Now, suppose we'd like to stop the algorithm when the posterior covariance determinant crosses below a threshold, corresponding to a low posterior volume. Denote this threshold as $\varepsilon$, and define the stopping time $T_\varepsilon$ as $\min \{i: \abs{\Sigma_{W|y^i}}^\frac{1}{d} < \varepsilon\}$. By rearranging the upper bound in Lemma \ref{lem:LCC_ent_Lower} we have the necessary condition
\begin{equation}
    h_i(W) \leq \frac{d}{2} \log_2(2\pi e \varepsilon)
\end{equation}
Letting $\tau_1 = \frac{d}{2} \log_2(\frac{1}{2\pi e \varepsilon})$ be the entropic stopping threshold with stopping time $T_1$, from \eqref{eq:martingale_stopping_upper_lower} this results in (with $\E[T_\varepsilon] \geq \E[T_1]$ since this is a necessary condition)

\begin{equation}
    \E[T_\varepsilon] \geq \E[T_1] \geq \tau_1 \label{eq:cov_stop_lower}
\end{equation}

Similarly, by rearranging the lower bound in Lemma \ref{lem:LCC_ent_Lower} we observe that a sufficient condition for this stopping criterion is
\begin{equation}
h_i(W) \leq \frac{d}{2} \log_2\frac{2 \eps}{e^2 c_d}
\end{equation}
where $c_d = (e^2 d^2)/(4\sqrt{2} (d+2))$. Letting $\tau_2=\frac{d}{2} \log_2\frac{e^2 c_d}{2 \varepsilon}$ be the entropic stopping threshold with stopping time $T_2$, we have from \eqref{eq:martingale_stopping_upper_lower} (with $\E[T_\varepsilon] \leq \E[T_2]$ since this is only a sufficient condition):
\begin{equation}
    \E[T_\varepsilon] \leq \E[T_2] \leq \tau_2 + \frac{\tau_2+1}{l(\tau_2)} - \frac{1}{l(\tau_2)} \int_{0}^{\tau_2} l(x) dx
\end{equation}

Combining these, we have the theorem result.

\paragraph{Verifying optional stopping theorem:} Consider a submartingale of the form $P_i = \frac{Q_i}{C} - i$ for some $C > 0$, where $Q_i$ is also a submartingale satisfying $Q_i = 0$, $Q_i \geq 0$ for $i \geq 0$, and $\abs{Q_{i+1} - Q_i} \leq B$ for some $B > C > 0$. This implies
\begin{align*}
    \abs{P_{i} - P_{i-1}} &= \left\vert\frac{Q_{i}}{C} - i - \frac{Q_{i-1}}{C} + (i-1)\right\vert \\
    &= \frac{\left\vert Q_{i} - Q_{i-1} - C \right\vert}{C}\\
    &\leq \frac{\left\vert Q_{i} - Q_{i-1}\right\vert}{C} + 1\\
    &\leq \frac{B}{C}+1 \eqqcolon B' < \infty
\end{align*}
Let stopping time $T_Q$ be defined as $\min \{i: Q_i > \tau\}$ for some threshold $0< \tau < \infty$. This implies a stopping time on $P_i$ given by $T_P = \min \{i: P_i > \frac{\tau}{C}-i\}$, with $T\coloneqq T_Q = T_P$. We have from Theorem 5.2.6 of \cite{durrett2010probability} that $P_{T \wedge i}$ and $Q_{T \wedge i}$ are also submartingales.

Consider $\sup \E Q_{T \wedge i}^+ = \sup \E Q_{T \wedge i} \leq \tau + B< \infty$, by definition. From Theorem 5.2.8 of \cite{durrett2010probability}, as $i \to \infty$, $Q_{T \wedge i}$ converges a.s.\ to a limit $Q$ with $\E|Q| < \infty$ (and hence $|Q| < \infty$ a.s.). This also implies $|Q_{T \wedge i}| \overset{\text{a.s.}}{\to}|Q|$.

Similarly, $\sup \E P_{T \wedge i}^+=\sup \E\left[ \left\{\frac{Q_{T \wedge i}}{C} - (T \wedge i)\right\}^+\right]\leq \sup \E\left[\frac{Q_{T \wedge i}^+}{C}\right] \leq \frac{\tau + B}{C} < \infty$, so as $i \to \infty$, $P_{T \wedge i}$ converges a.s.\ to a limit $P$ with $\E|P| < \infty$ (and hence $|P| < \infty$ a.s.). This also implies $|P_{T \wedge i}| \toas|P|$.

We have
\begin{align*}
    T \wedge i &= \left\vert (T \wedge i) - \frac{Q_{T \wedge i}}{C} + \frac{Q_{T \wedge i}}{C}\right\vert\\
    &\leq \left\vert (T \wedge i) - \frac{Q_{T \wedge i}}{C}\right\vert + \frac{\left\vert Q_{T \wedge i}\right\vert}{C}\\
    &= \abs{P_{T \wedge i}} + \frac{\left\vert Q_{T \wedge i}\right\vert}{C}
\end{align*}
Since the right side converges a.s.\ to a limit $|P| + \frac{|Q|}{C} \eqqcolon L$ and $L < \infty$ a.s., for all large enough $i$, $T \wedge i < L$ a.s. which implies $T < L$ a.s.\ and therefore $\E[T] < \infty$. Combining this fact with $\abs{P_{i+1} - P_i} \leq B'$, Theorem 5.7.5 of \cite{durrett2010probability} gives that $P_{T \wedge i}$ is uniformly integrable. Then, from Theorem 5.7.4 of \cite{durrett2010probability}, for any stopping time $L \leq T$, $\E[P_L] \leq \E[P_T]$.

\subsection{Proof of Theorem \ref{thm:stoppingorder}}
To lower bound the complexity of $T_\varepsilon$, we substitute the definition of $\tau_1$ into \eqref{eq:cov_stop_lower}, which is true for any query scheme:
\begin{align}
    \E[T_\varepsilon] &\geq \frac{d}{2} \log_2\left(\frac{1}{2\pi e \varepsilon}\right)\\
    \implies \E[T_\varepsilon] &= \Omega\left(d \log \frac{1}{\varepsilon}\right)
\end{align}

To upper bound the complexity of $T_\varepsilon$, note that $\tau_2 - \frac{1}{l(\tau_2)} \int_{0}^{\tau_2} l(x) dx \leq 0$ from the mean value theorem, so $\E[T_\varepsilon] \leq \frac{\tau_2+1}{l(\tau_2)}$. Also note that
\begin{align}
    L_{c,k}(\sigma) &= \left(1-h_b\left(f\left(\frac{c k \sigma}{2}\right)\right)\right) \left(1- c\right) \notag \\
    &\geq \left(1-\sech\left(\frac{ck\sigma}{4}\right)\right)(1-c) \label{eq:sech_sub}\\
    &\geq \frac{c^2k^2\sigma^2}{32+c^2k^2\sigma^2}(1-c) \label{eq:cosh_sub}
\end{align}

where \eqref{eq:sech_sub} is from $h_b(p) \leq 2\sqrt{p(1-p)}$, and \eqref{eq:cosh_sub} is from $\sech(x) \leq \frac{2}{2+x^2}$.

Plugging in the definition for $\tau_2$ into $l(\tau_2)$ we have
\begin{equation}
    l(\tau_2) = L_{c,k_\text{min}}\left(\frac{2^{\frac{-\tau_2}{d}}}{\sqrt{2\pi e}}\right)=L_{c,k_\text{min}}\left(\sqrt{\frac{\varepsilon}{\pi e^3 c_d}}\right)
\end{equation}
so
\begin{equation}
    l(\tau_2) \geq \frac{c^2k_\text{min}^2}{32\pi e^3 c_d\frac{1}{\varepsilon}+c^2k_\text{min}^2}(1-c)
\end{equation}
which implies
\begin{align}
    \E[T_\varepsilon] &\leq \frac{\left(\frac{d}{2} \log_2\frac{e^2 c_d}{2 \varepsilon}+1\right)\left(32\pi e^3 c_d\frac{1}{\varepsilon}+c^2k_\text{min}^2\right)}{(1-c)c^2k_\text{min}^2}\\
    \implies \E[T_\varepsilon] &= O\left(d\log\frac{1}{\varepsilon} + \left(\frac{1}{\varepsilon k_\text{min}^2}\right) d^2\log\frac{1}{\varepsilon}\right)
\end{align}

\subsection{Proof of Proposition \ref{prop:meancut_MI_lower}}

\begin{proof}
We first bound $p_1 := P(Y=1)$.
Recall that for some fixed $k$, $f(x) = (1+e^{-kx})^{-1}$.  First note that
\begin{align*}
    \int_a^b f(x) dx = \int_a^b \frac{1}{k}\frac{ke^{kx}}{1+e^{kx}} dx
    = \frac{1}{k}\int_a^b \frac{u'}{u}dx
    = \frac{1}{k}\int_{u(a)}^{u(b)} \frac{1}{u} du
    = \frac{1}{k}\ln \frac{1+e^{kb}}{1+e^{ka}}.
\end{align*}
We have that $P(Y = 1) = \E[P(Y=1|X=x)]=\E[f(x)]$.
Note that $\forall x$, $(1+e^{-kx})\le 1$.  Then,
\begin{align*}
    p_1 = \E[f(x)] &= \int f(x)p_X(x)dx
        \\ &= \int_{x\le 0} f(x)p_X(x)dx + \int_{x > 0} f(x)p_X(x) dx
        \\ &\le \frac{1}{\sigma_X}\int_{-\infty}^0 f(x) dx
        + \int_{x>0} f(x)p_X(x)dx
        \\ &\le \frac{1}{\sigma_Xk}\ln\frac{1+e^{k0}}{1}+\int_{x>0} 1 p_X(x) dx
        \\ &\le  \frac{\ln 2}{\sigma_Xk} + P(X>0)
        \le \frac{\ln 2}{\sigma_X k} + 1-\frac{1}{e},
\end{align*}
where we use $p_X(x) \le 1/\sigma_X$ and
the final inequality follows from $P(X\le 0)\ge\frac{1}{e}$ for
zero-mean LCC $X$ \cite{lovasz2007geometry}. Using a similar argument it can be shown that
$\E[f(x)] \ge 1/e -\ln2/(\sigma_Xk)$.
Combining these, we have
\begin{equation}
    \frac1e-\frac{\ln2}{\sigma_X k}
    \le p_1
    \le 1-\left(\frac1e-\frac{\ln2}{\sigma_X k}\right).
    \label{eq:prop26_p1bound}
\end{equation}

Now we turn to lower bounding $I(X;Y) := H(Y) - H(Y|X)$.
The second term can be written
\begin{align}
    H(Y|X) = \E_X H(Y|X=x)
        = \int_{-\infty}^\infty h_b(f(x)) p_X(x) dx
        \le \frac{1}{\sigma_X} \int_{-\infty}^\infty h_b(f(x)) dx.
\end{align}
where the inequality follows from Lemma~\ref{lem:lccub}.
Since
\begin{align*}
    \MoveEqLeft H(Y|X = x)
    \\ &= -f(x)\log_2 f(x) - (1-f(x))\log_2(1-f(x))
    \\ &= \frac{1}{1+e^{-kx}}\log_2(1+e^{-kx})
    + \frac{e^{-kx}}{1+e^{-kx}}\log_2((1+e^{-kx})/e^{-kx})
    \\ &= \frac{1+e^{-kx}}{1+e^{-kx}}\log_2(1+e^{-kx})
    - \frac{e^{-kx}}{1+e^{-kx}}\log_2(e^{-kx})
    \\ &= \log_2(1+e^{-kx}) + \frac{kxe^{-kx}\log_2(e)}{1+e^{-kx}},
\end{align*}
which is an even function, we have (omitting details of the integration)
\begin{align}
    H(Y|X) \le \frac{2}{\sigma_X} \int_0^\infty \log_2(1+e^{-kx})
    + \frac{kxe^{-kx}\log_2(e)}{1+e^{-kx}} dx
    = \frac{\pi^2 (\log_2 e)}{3k
    \sigma_X
    } \label{eq:prop26a}
\end{align}

For the second term, note that $H(Y=1) = h_b(p_1)$. The binary entropy function
is symmetric about, and monotonically decreasing from $p=1/2$. Therefore,
\begin{equation}
    \label{eq:prop26b}
    H(Y) = h_b(p_1) \ge h_b\left(\frac1e - \frac{\ln2}{\sigma_Xk}\right)
\end{equation}

Combining \eqref{eq:prop26a} and \eqref{eq:prop26b} gives the desired result.
\end{proof}

\subsection{Proof of Lemma~\ref{lem:technical}}

\begin{proof}
Since $p(W|y^{i})$
is log-concave, and by Jensen's inequality,
\begin{align*}
-h(W|y^{i}) &= \E_{W|y^{i}} [\log_2 p(W|y^{i})]
\leq \log_2 p(\E[W|y^{i}]|y^{i}) \\
&\leq \log_2 \sup_w p(w|y^{i}).
\end{align*}
Without loss of generality, we may suppose $\E[W|y^{i}] = 0$, and let $V = \Sigma_{W|y^i}^{-\frac{1}{2}} W$ and $W \sim P_{W|y^i}$, such that $\E[V] = 0$ and $\E[V V^T] = \Sigma_{W|y^i}^{-\frac{1}{2}} \E[W W^T] \Sigma_{W|y^i}^{-\frac{1}{2}} = \Sigma_{W|y^i}^{-\frac{1}{2}}\Sigma_{W|y^i}\Sigma_{W|y^i}^{-\frac{1}{2}}=I$ and therefore $V$ is isotropic.
From \cite{klivans2009baum} we have that $p_V(v) \leq 2^{8d} d^{\frac{d}{2}}$. From the density of a linear transformation of a random variable we have
\[
p_{W|y^i}(w) = \frac{p_V(\Sigma_{W|y^i}^{-\frac{1}{2}}w)}{\abs{\Sigma_{W|y^i}^{\frac{1}{2}}}} \leq \frac{2^{8d} d^{\frac{d}{2}}}{\abs{\Sigma_{W|y^i}}^{\frac{1}{2}}}
.
\]
Therefore, for our query strategy we have (with $f_{i}(W)$ denoting the logistic response model for the query at iteration $i$)
\begin{align*}
\MoveEqLeft p(w|y^{i}) = p(w|y_{i} = y,y^{i-1})
\\ &= \frac{f_{i}(W)y + (1-f_{i}(W))(1-y)}{p(y_{i}=y|y^{i-1})} p(W|y^{i-1})
\\ &\leq \frac{(1)y + (1-(0))(1-y)}{p(y_{i}=y|y^{i-1})} p(W|y^{i-1})
\\ &= \frac{1}{p(y_{i}=y|y^{i-1})} p(W|y^{i-1})
\\ & \leq \frac{1}{p(y_{i}=y|y^{i-1})}\frac{2^{8d} d^{\frac{d}{2}}}{\abs{\Sigma_{W|y^{i-1}}}^{\frac{1}{2}}}
\\ \implies& \sup_w p(w|y^{i})
\leq \frac{1}{p(y_{i}=y|y^{i-1})}\frac{2^{8d} d^{\frac{d}{2}}}{\abs{\Sigma_{W|y^{i-1}}}^{\frac{1}{2}}},
\end{align*}
which implies
\begin{align*}
\log_2 \sup_w p(w|y^{i}) &\leq 8d + \frac{d}{2}\log_2 d
- \frac{1}{2} \log_2 \abs{\Sigma_{W|y^{i-1}}}
- \log_2(p(y_{i}=y|y^{i-1})),
\end{align*}
and hence
\begin{align*}
h(W|y^{i})
&\geq \frac{1}{2} \log_2 \abs{\Sigma_{W|y^{i-1}}} + \log_2(p(y_{i}=y|y^{i-1}))
- \left(8d + \frac{d}{2}\log_2 d\right)
\\ &\geq \frac{1}{2} \log_2 ((2\pi e)^d\abs{\Sigma_{W|y^{i-1}}})
- \frac{1}{2} \log_2 (2\pi e)^d + \log_2(p(y_{i}=y|y^{i-1}))
- \left(8d + \frac{d}{2}\log_2 d\right)\\
&\geq h(W|y^{i-1}) + \log_2(p(y_{i}=y|y^{i-1}))
- \left(8d + \frac{d}{2}\log_2 (2\pi e d)\right) \quad\text{from
\eqref{eq:ent_upper}}.
\end{align*}
For equiprobable queries $p(y_{i}=y|y^{i-1}) = 1/2$, and so
we have
\begin{equation}
h(W|y^{i-1}) - h(W|y^{i}) \leq \gamma(d).
\label{eq:martingale_deviation_lower_equi}
\end{equation}
where $\gamma(d) = 8d + \frac{d}{2}\log_2 (2\pi e d) + 1$.

To obtain the other direction, let $h_y^{i-1} = h(W|Y_{i}=y,y^{i-1})$, $y_m = \argmin_{y \in \{0,1\}} h_y^{i-1}$, $y_M = 1 - y_m$. Note that $h_{y_M}^{i-1} \geq h_{y_m}^{i-1}$. We have
\begin{align}
    h(W|Y_{i}, y^{i-1}) &= \frac12 h_m^{i-1} + \frac12 h_M^{i-1}
    \notag
    \\ &\ge \frac12(h(W|y^{i-1}) - \gamma(d)) + \frac12 h_M^{i-1}
    \notag
    \\ &\ge \frac12(h(W|y^{i-1}) - \gamma(d)) + \frac12 h(W|y^{i})
    \notag
\end{align}
where the first inequality follows from \eqref{eq:martingale_deviation_lower_equi} and the second inequality follows from the definition of $h_M$. From the non-negativity of mutual information, we have that $h(W|Y_{i}, y^{i-1}) \leq h(W|y^{i-1})$, implying
\begin{align}
    h(W|y^{i-1}) &\ge \frac12(h(W|y^{i-1}) - \gamma(d)) + \frac12 h(W|y^{i})
    \notag
    \\ h(W|y^{i-1}) &- h(W|y^{i}) \ge - \gamma(d) \label{eq:other_dir}
\end{align}
Combining \eqref{eq:other_dir} with \eqref{eq:martingale_deviation_lower_equi} we have the desired result.
\end{proof}

\clearpage

\subsection{Additional experiments}
\paragraph{Performance across dimensions:} Figure~\ref{fig:dim-exp} plots $\MSE$ against embedding dimension averaged across all trials at both 20 and 60 queries asked. For all dimensions across all experiments, the learned Yummly \texttt{Food-10k} embedding was centered and scaled by a constant amount such that the unit hypercube of user preference points would be contained in the embedding of items, allowing for a rich pool of pairs to be selected from for any user point. This scaling constant was heuristically set to $\sqrt{d}/(3\tilde{\lambda}^{1/2})$, where $\tilde{\lambda}$ is the smallest eigenvalue of the covariance matrix of embedding items. This scaling is motivated by setting the smallest variance direction of the embedding to align with the furthest point of the unit cube at a distance of $\sqrt{d}$ from the origin. For each learned embedding, responses to the Yummly \texttt{Food-10k} training triplets were predicted by selecting the closer of the two comparison items to the reference item, using the embedding to measure distances. For a given embedding, we refer to the fraction of incorrectly predicted triplet responses as the \emph{triplet error fraction}, which we plot for reference against embedding dimension in Figure~\ref{fig:trip_err}.
\begin{figure}[htb]
    \def\fr{0.33}
    \def\fh{1.2}
    \centering
    \begin{subfigure}[t]{\fr\textwidth}
        \centering
        \includegraphics[height=\fh in]{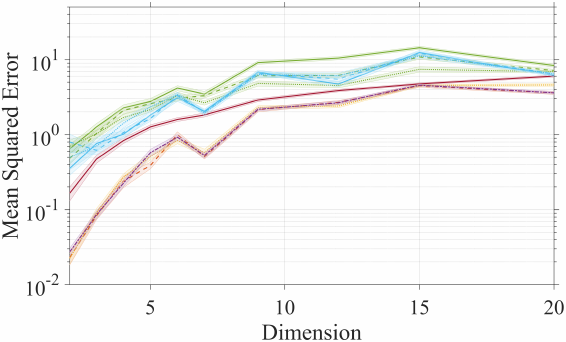}
        \caption{Matched ``constant'' noise: 20 queries}
        \label{fig:dim-con20}
    \end{subfigure}%
    \begin{subfigure}[t]{\fr\textwidth}
        \centering
        \includegraphics[height=\fh in]{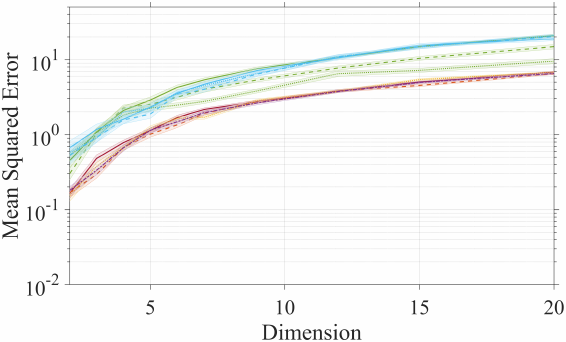}
        \caption{Matched ``normalized'' noise: 20 queries}
        \label{fig:dim-norm20}
    \end{subfigure}%
    \begin{subfigure}[t]{\fr\textwidth}
        \centering
        \includegraphics[height=\fh in]{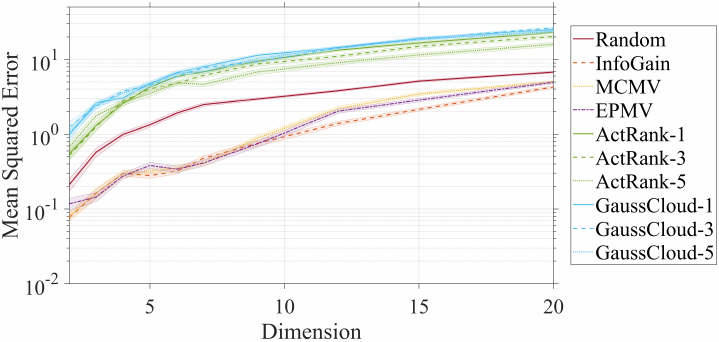}
        \caption{Matched ``decaying'' noise: 20 queries}
        \label{fig:dim-dec20}
    \end{subfigure}\\
    \begin{subfigure}[t]{\fr\textwidth}
        \centering
        \includegraphics[height=\fh in]{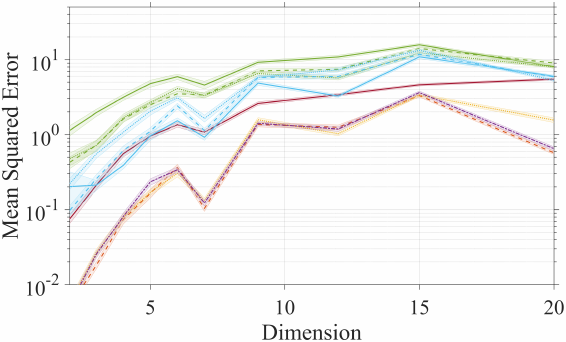}
        \caption{Matched ``constant'' noise: 60 queries}
        \label{fig:dim-con60}
    \end{subfigure}%
    \begin{subfigure}[t]{\fr\textwidth}
        \centering
        \includegraphics[height=\fh in]{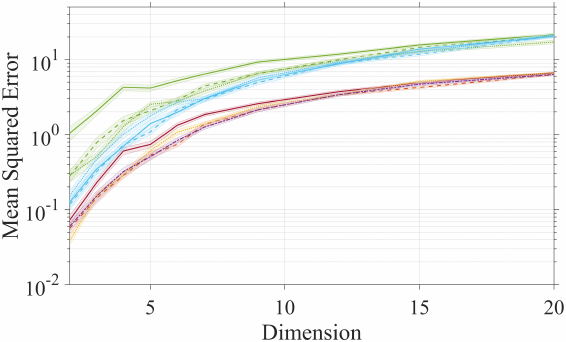}
        \caption{Matched ``normalized'' noise: 60 queries}
        \label{fig:dim-norm60}
    \end{subfigure}%
    \begin{subfigure}[t]{\fr\textwidth}
        \centering
        \includegraphics[height=\fh in]{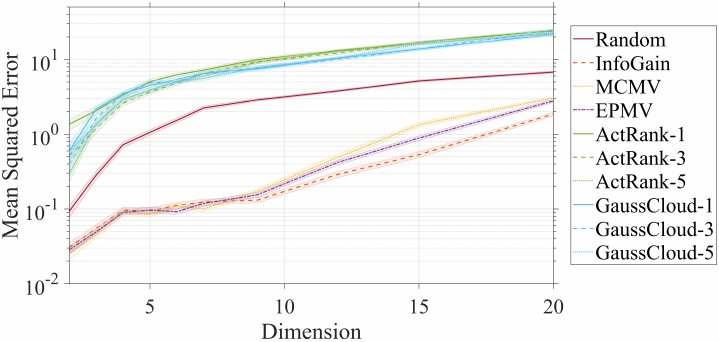}
        \caption{Matched ``decaying'' noise: 60 queries}
        \label{fig:dim-dec60}
    \end{subfigure}
    \caption{Mean squared error performance across dimensions at a fixed number of answered queries, plotted with $\pm$ one standard error.}
    \label{fig:dim-exp}
\end{figure}
\begin{figure}[htb]
    \centering
    \includegraphics[width=0.5\textwidth]{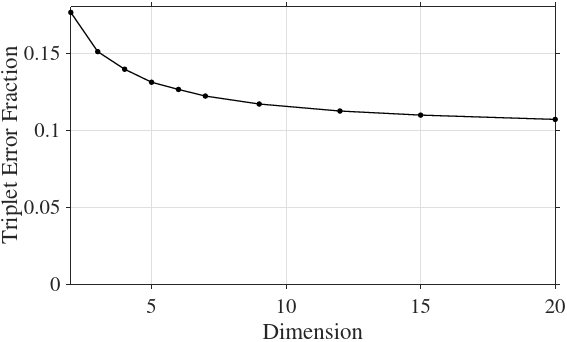}
    \caption{Triplet error fraction versus embedding dimension.}
    \label{fig:trip_err}
\end{figure}

\clearpage

\paragraph{Speed plot comparison}
Figure~\ref{fig:speed-plot} plots $\MSE$ against cumulative compute time for matched logistic noise with ``normalized'' noise constant for $d\in\{4,7,12\}$ in a smaller scale experiment of 60 queries per trial, and 40 trials per dimension. Specifically, $\MSE$ and average cumulative compute time were calculated for \emph{each} number of queries asked, and these two values plotted against each other directly in a range up to 600 seconds. We evaluated all three of our methods (InfoGain, MCMV, EPMV) at various pair pool downsampling rates of $\beta \in \{10^{-3},10^{-3.5},10^{-4}\}$, as listed in the figure legend next to each method. Each experiment was run on an Intel Xeon CPU E5-2680 v4 2.40 GHz processor. 

\begin{figure}[htb]
    \def\fr{0.33}
    \def\fh{1.2}
    \centering
    \begin{subfigure}[t]{\fr\textwidth}
        \centering
        \includegraphics[height=\fh in]{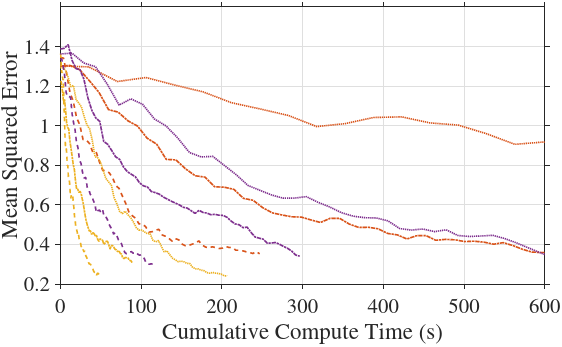}
        \caption{$d$ = 4}
        \label{fig:speed-plot-4}
    \end{subfigure}%
    \begin{subfigure}[t]{\fr\textwidth}
        \centering
        \includegraphics[height=\fh in]{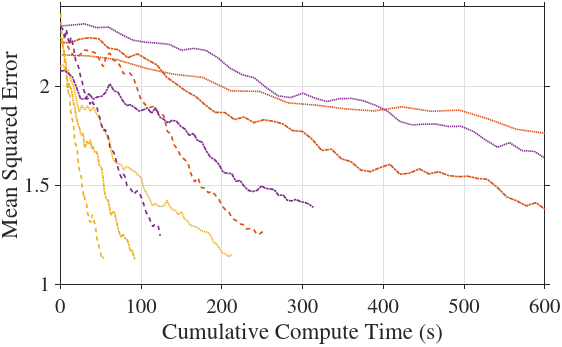}
        \caption{$d$ = 7}
        \label{fig:speed-plot-7}
    \end{subfigure}%
    \begin{subfigure}[t]{\fr\textwidth}
        \centering
        \includegraphics[height=\fh in]{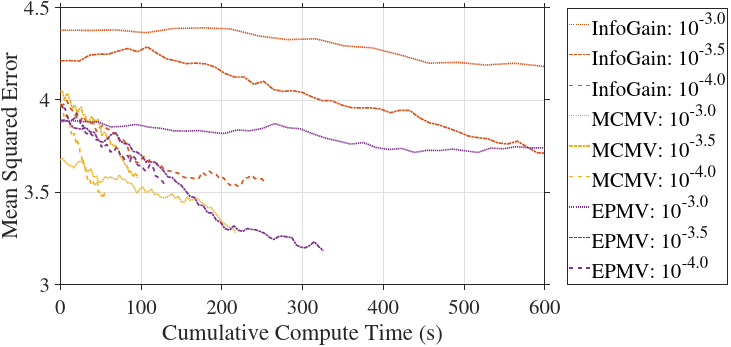}
        \caption{$d$ = 12}
        \label{fig:speed-plot-12}
    \end{subfigure}
    \caption{Mean squared error performance against cumulative compute time (s) for matched, ``normalized'' logistic noise at various pair downsampling rates. Error bars have been omitted for visual clarity.}
    \label{fig:speed-plot}
\end{figure}

\paragraph{Additional experimental results}
In this section, $\MSE$ is evaluated for both matched and mismatched noise at $d\in\{3,5,7,9,12\}$ in Figs.~\ref{fig:mse_3}, \ref{fig:mse_5}, \ref{fig:mse_7}, \ref{fig:mse_9}, and \ref{fig:mse_12}. The model for $k_{pq}$ on mismatched Gaussian noise is chosen as the maximum-likelihood model (``constant,'' ``normalized'', ``decaying'') on the training triplets, calculated separately for each embedding dimension. For all experiments, $\beta=10^{-3}$ and results are averaged over 50 trials.
\begin{figure}[htb]
    \def\fr{0.33}
    \def\fh{1.2}
    \centering
    \begin{subfigure}[t]{\fr\textwidth}
        \centering
        \includegraphics[height=\fh in]{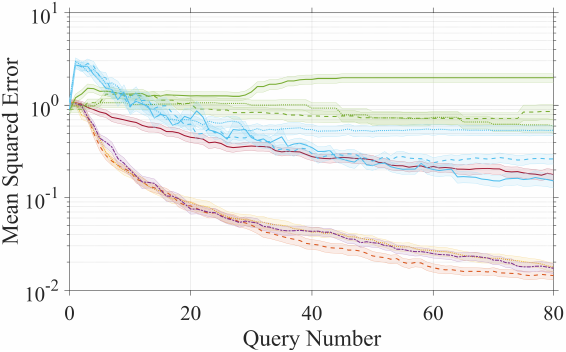}
        \caption{``Constant'' model,\\matched, $d$ = 3}
        \label{fig:mse_3_1}
    \end{subfigure}%
    \begin{subfigure}[t]{\fr\textwidth}
        \centering
        \includegraphics[height=\fh in]{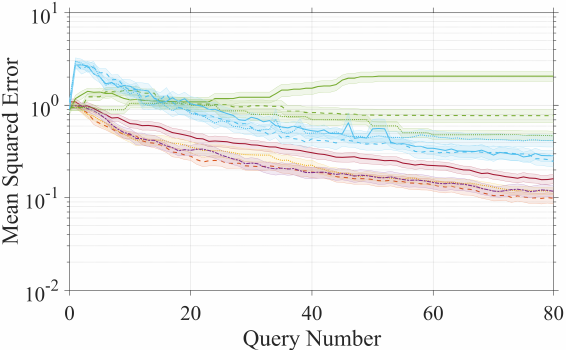}
        \caption{``Normalized'' model,\\matched, $d$ = 3}
        \label{fig:mse_3_2}
    \end{subfigure}%
    \begin{subfigure}[t]{\fr\textwidth}
        \centering
        \includegraphics[height=\fh in]{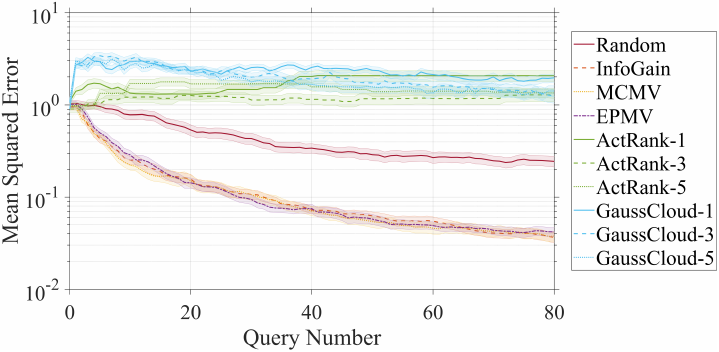}
        \caption{``Decaying'' model,\\matched, $d$ = 3}
        \label{fig:dim-mse_3_3}
    \end{subfigure}\\
    \begin{subfigure}[t]{\fr\textwidth}
        \centering
        \includegraphics[height=\fh in]{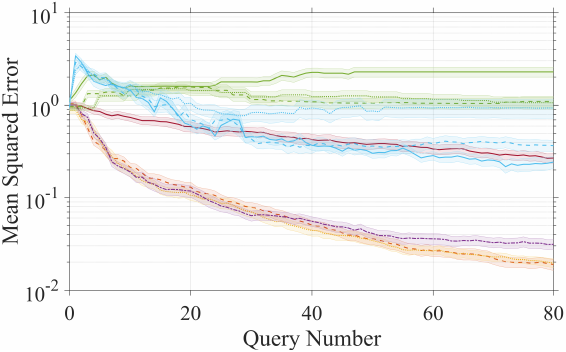}
        \caption{``Constant'' model,\\mismatched, $d$ = 3}
        \label{fig:dim-mse_3_4}
    \end{subfigure}%
    \begin{subfigure}[t]{\fr\textwidth}
        \centering
        \includegraphics[height=\fh in]{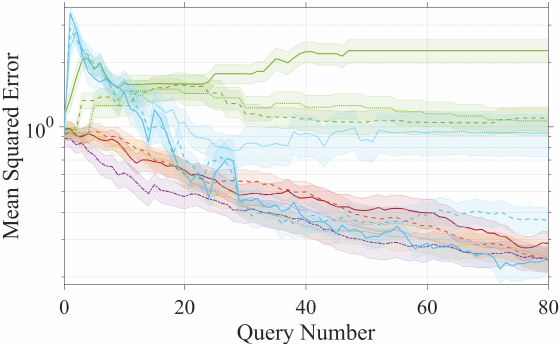}
        \caption{``Normalized'' model,\\mismatched, $d$ = 3}
        \label{fig:mse_3_5}
    \end{subfigure}%
    \begin{subfigure}[t]{\fr\textwidth}
        \centering
        \includegraphics[height=\fh in]{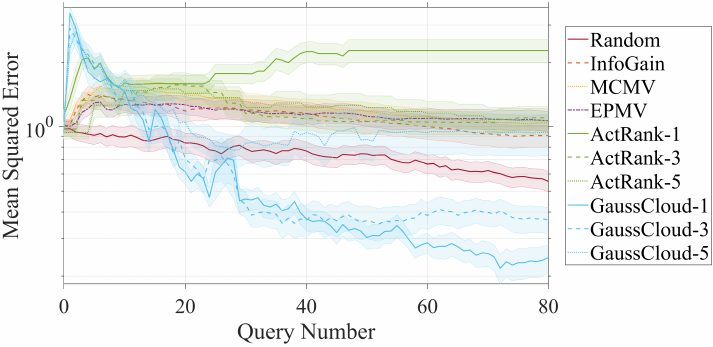}
        \caption{``Decaying'' model,\\mismatched, $d$ = 3}
        \label{fig:mse_3_6}
    \end{subfigure}
    \caption{Mean squared error performance versus number of queries asked for pairwise search in 3 dimensions, plotted with $\pm$ one standard error. All mismatched noise is Gaussian with a ``constant'' noise constant.}
    \label{fig:mse_3}
\end{figure}
\begin{figure}[htb]
    \def\fr{0.33}
    \def\fh{1.2}
    \centering
    \begin{subfigure}[t]{\fr\textwidth}
        \centering
        \includegraphics[height=\fh in]{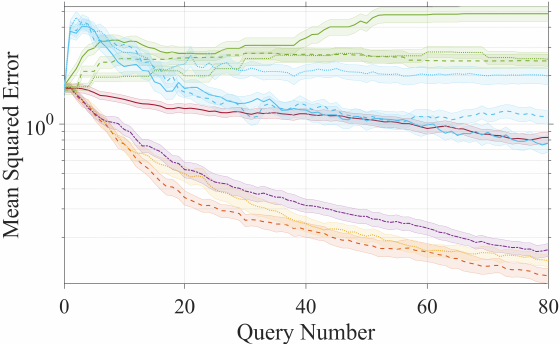}
        \caption{``Constant'' model,\\matched, $d$ = 5}
        \label{fig:mse_5_1}
    \end{subfigure}%
    \begin{subfigure}[t]{\fr\textwidth}
        \centering
        \includegraphics[height=\fh in]{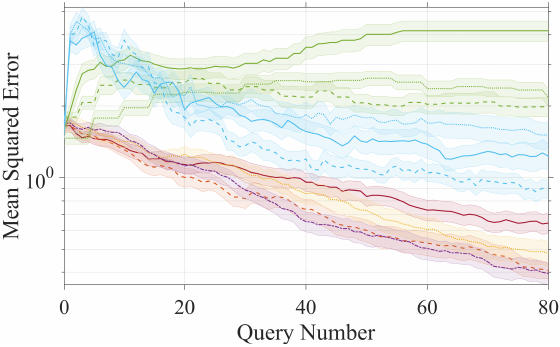}
        \caption{``Normalized'' model,\\matched, $d$ = 5}
        \label{fig:mse_5_2}
    \end{subfigure}%
    \begin{subfigure}[t]{\fr\textwidth}
        \centering
        \includegraphics[height=\fh in]{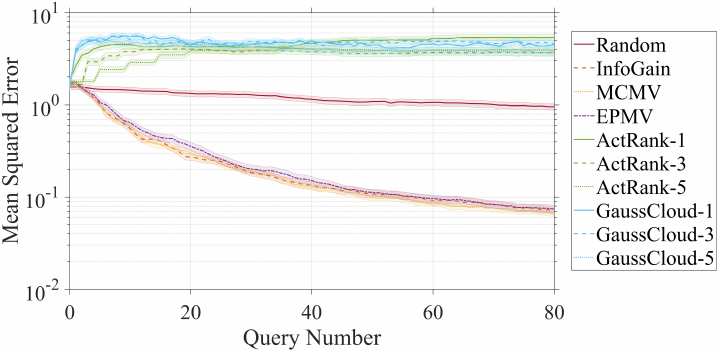}
        \caption{``Decaying'' model,\\matched, $d$ = 5}
        \label{fig:dim-mse_5_3}
    \end{subfigure}\\
    \begin{subfigure}[t]{\fr\textwidth}
        \centering
        \includegraphics[height=\fh in]{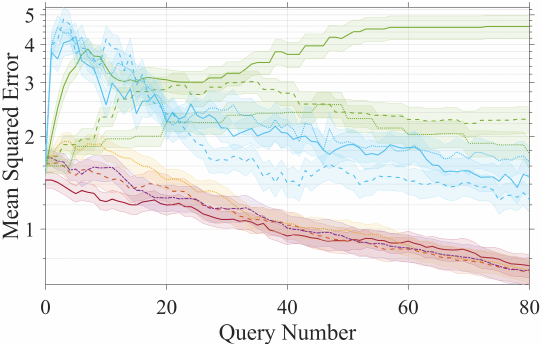}
        \caption{``Constant'' model,\\mismatched, $d$ = 5}
        \label{fig:dim-mse_5_4}
    \end{subfigure}%
    \begin{subfigure}[t]{\fr\textwidth}
        \centering
        \includegraphics[height=\fh in]{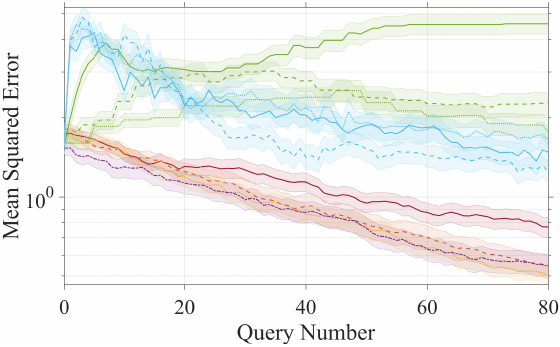}
        \caption{``Normalized'' model,\\mismatched, $d$ = 5}
        \label{fig:mse_5_5}
    \end{subfigure}%
    \begin{subfigure}[t]{\fr\textwidth}
        \centering
        \includegraphics[height=\fh in]{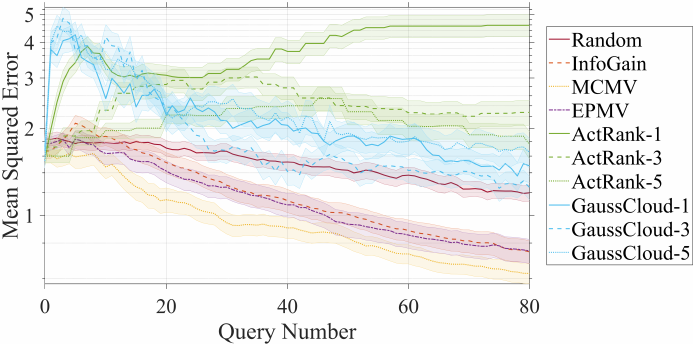}
        \caption{``Decaying'' model,\\mismatched, $d$ = 5}
        \label{fig:mse_5_6}
    \end{subfigure}
    \caption{Mean squared error performance versus number of queries asked for pairwise search in 5 dimensions, plotted with $\pm$ one standard error. All mismatched noise is Gaussian with a ``normalized'' noise constant.}
    \label{fig:mse_5}
\end{figure}
\begin{figure}[htb]
    \def\fr{0.33}
    \def\fh{1.2}
    \centering
    \begin{subfigure}[t]{\fr\textwidth}
        \centering
        \includegraphics[height=\fh in]{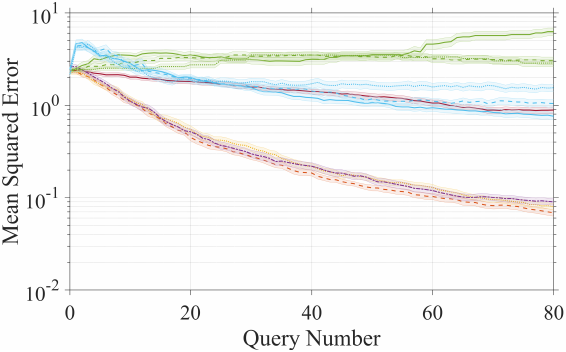}
        \caption{``Constant'' model,\\matched, $d$ = 7}
        \label{fig:mse_7_1}
    \end{subfigure}%
    \begin{subfigure}[t]{\fr\textwidth}
        \centering
        \includegraphics[height=\fh in]{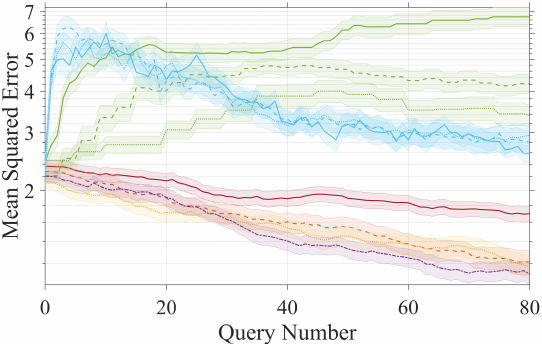}
        \caption{``Normalized'' model,\\matched, $d$ = 7}
        \label{fig:mse_7_2}
    \end{subfigure}%
    \begin{subfigure}[t]{\fr\textwidth}
        \centering
        \includegraphics[height=\fh in]{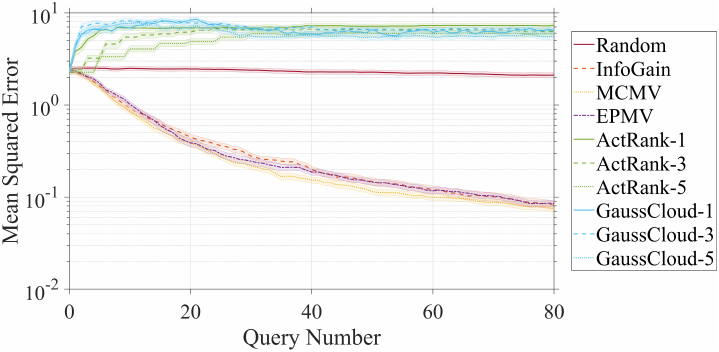}
        \caption{``Decaying'' model,\\matched, $d$ = 7}
        \label{fig:dim-mse_7_3}
    \end{subfigure}\\
    \begin{subfigure}[t]{\fr\textwidth}
        \centering
        \includegraphics[height=\fh in]{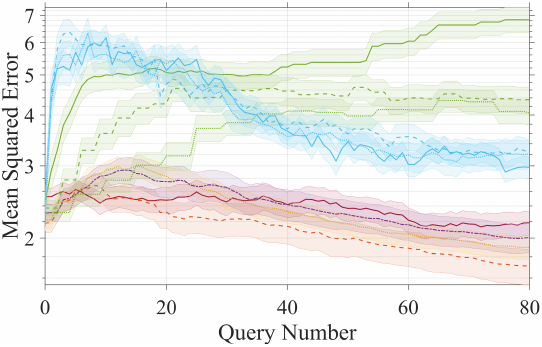}
        \caption{``Constant'' model,\\mismatched, $d$ = 7}
        \label{fig:dim-mse_7_4}
    \end{subfigure}%
    \begin{subfigure}[t]{\fr\textwidth}
        \centering
        \includegraphics[height=\fh in]{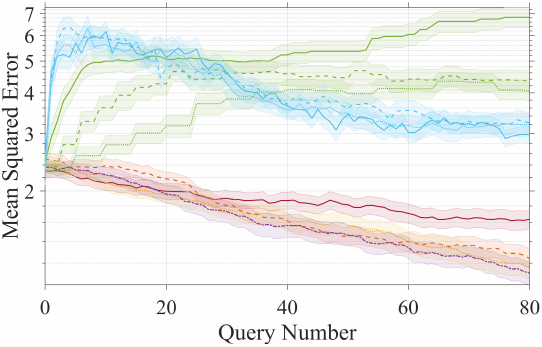}
        \caption{``Normalized'' model,\\mismatched, $d$ = 7}
        \label{fig:mse_7_5}
    \end{subfigure}%
    \begin{subfigure}[t]{\fr\textwidth}
        \centering
        \includegraphics[height=\fh in]{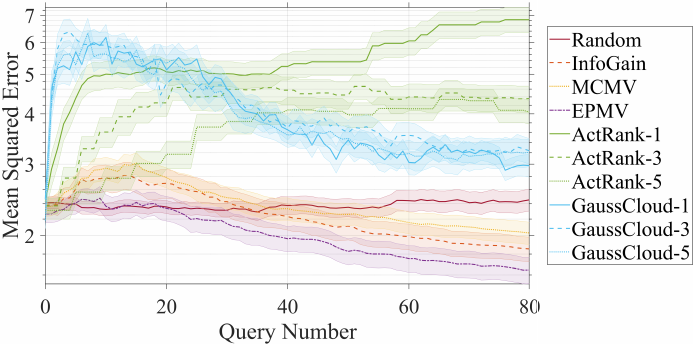}
        \caption{``Decaying'' model,\\mismatched, $d$ = 7}
        \label{fig:mse_7_6}
    \end{subfigure}
    \caption{Mean squared error performance versus number of queries asked for pairwise search in 7 dimensions, plotted with $\pm$ one standard error. All mismatched noise is Gaussian with a ``normalized'' noise constant.}
    \label{fig:mse_7}
\end{figure}
\begin{figure}[htb]
    \def\fr{0.33}
    \def\fh{1.2}
    \centering
    \begin{subfigure}[t]{\fr\textwidth}
        \centering
        \includegraphics[height=\fh in]{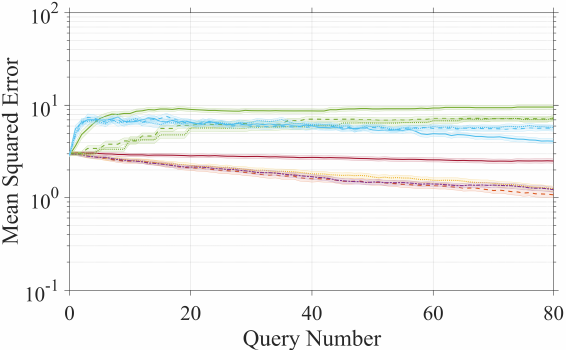}
        \caption{``Constant'' model,\\matched, $d$ = 9}
        \label{fig:mse_9_1}
    \end{subfigure}%
    \begin{subfigure}[t]{\fr\textwidth}
        \centering
        \includegraphics[height=\fh in]{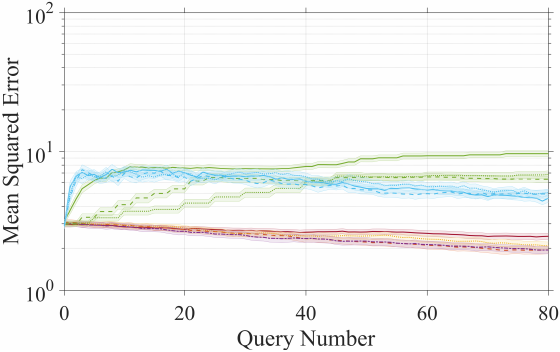}
        \caption{``Normalized'' model,\\matched, $d$ = 9}
        \label{fig:mse_9_2}
    \end{subfigure}%
    \begin{subfigure}[t]{\fr\textwidth}
        \centering
        \includegraphics[height=\fh in]{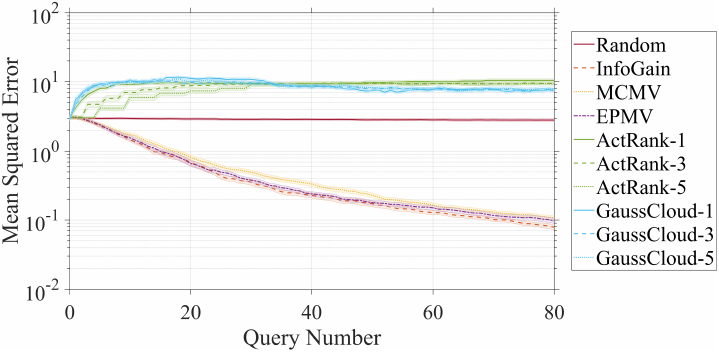}
        \caption{``Decaying'' model,\\matched, $d$ = 9}
        \label{fig:dim-mse_9_3}
    \end{subfigure}\\
    \begin{subfigure}[t]{\fr\textwidth}
        \centering
        \includegraphics[height=\fh in]{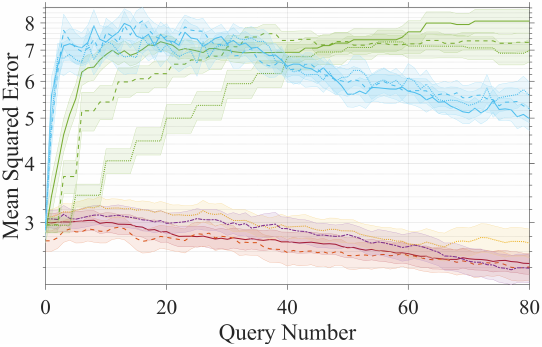}
        \caption{``Constant'' model,\\mismatched, $d$ = 9}
        \label{fig:dim-mse_9_4}
    \end{subfigure}%
    \begin{subfigure}[t]{\fr\textwidth}
        \centering
        \includegraphics[height=\fh in]{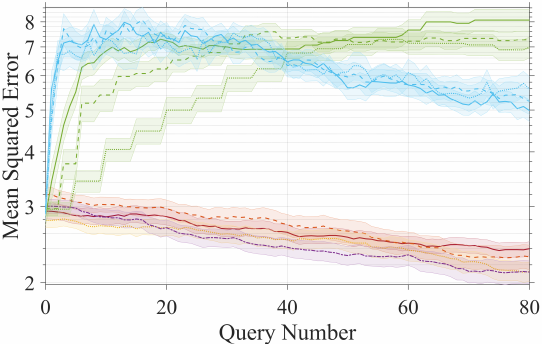}
        \caption{``Normalized'' model,\\mismatched, $d$ = 9}
        \label{fig:mse_9_5}
    \end{subfigure}%
    \begin{subfigure}[t]{\fr\textwidth}
        \centering
        \includegraphics[height=\fh in]{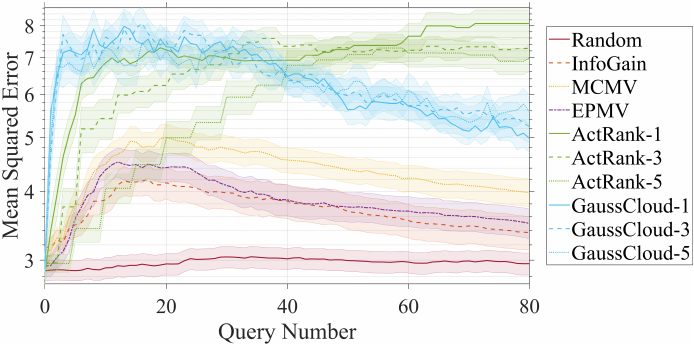}
        \caption{``Decaying'' model,\\mismatched, $d$ = 9}
        \label{fig:mse_9_6}
    \end{subfigure}
    \caption{Mean squared error performance versus number of queries asked for pairwise search in 9 dimensions, plotted with $\pm$ one standard error. All mismatched noise is Gaussian with a ``normalized'' noise constant.}
    \label{fig:mse_9}
\end{figure}
\begin{figure}[htb]
    \def\fr{0.33}
    \def\fh{1.2}
    \centering
    \begin{subfigure}[t]{\fr\textwidth}
        \centering
        \includegraphics[height=\fh in]{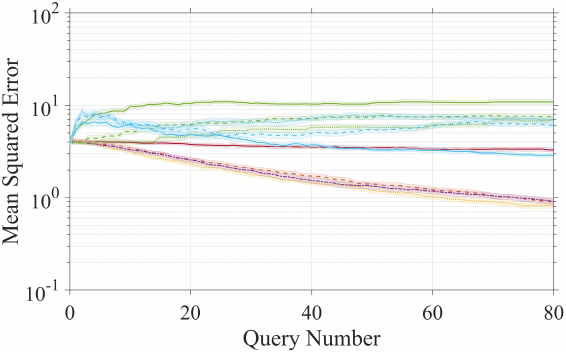}
        \caption{``Constant'' model,\\matched, $d$ = 12}
        \label{fig:mse_12_1}
    \end{subfigure}%
    \begin{subfigure}[t]{\fr\textwidth}
        \centering
        \includegraphics[height=\fh in]{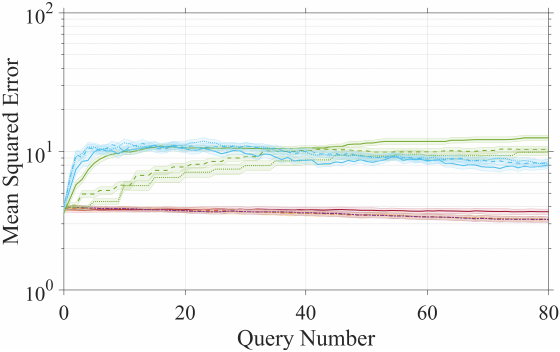}
        \caption{``Normalized'' model,\\matched, $d$ = 12}
        \label{fig:mse_12_2}
    \end{subfigure}%
    \begin{subfigure}[t]{\fr\textwidth}
        \centering
        \includegraphics[height=\fh in]{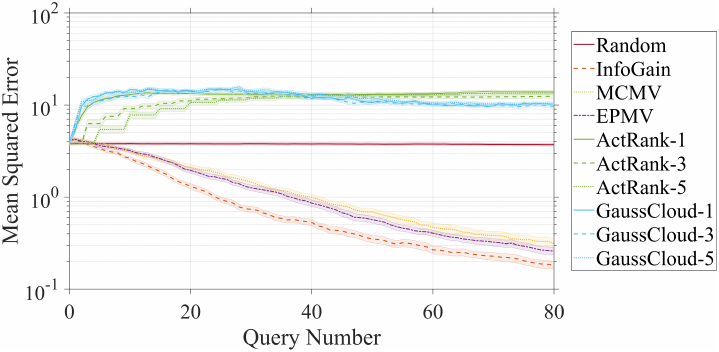}
        \caption{``Decaying'' model,\\matched, $d$ = 12}
        \label{fig:dim-mse_12_3}
    \end{subfigure}\\
    \begin{subfigure}[t]{\fr\textwidth}
        \centering
        \includegraphics[height=\fh in]{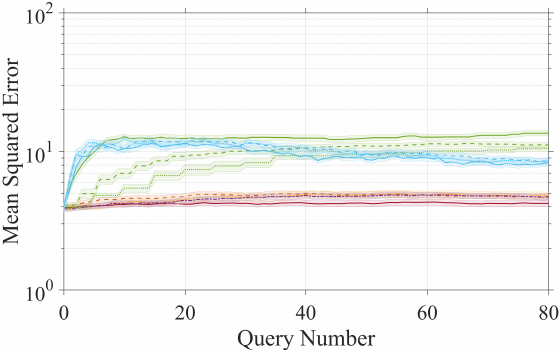}
        \caption{``Constant'' model,\\mismatched, $d$ = 12}
        \label{fig:dim-mse_12_4}
    \end{subfigure}%
    \begin{subfigure}[t]{\fr\textwidth}
        \centering
        \includegraphics[height=\fh in]{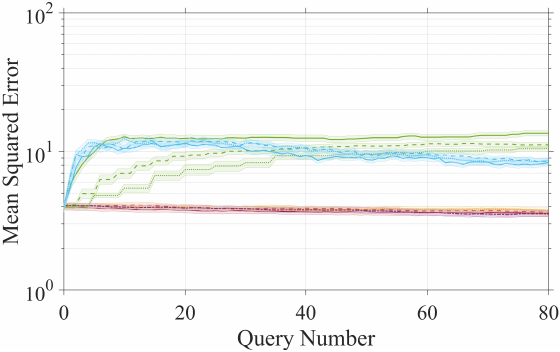}
        \caption{``Normalized'' model,\\mismatched, $d$ = 12}
        \label{fig:mse_12_5}
    \end{subfigure}%
    \begin{subfigure}[t]{\fr\textwidth}
        \centering
        \includegraphics[height=\fh in]{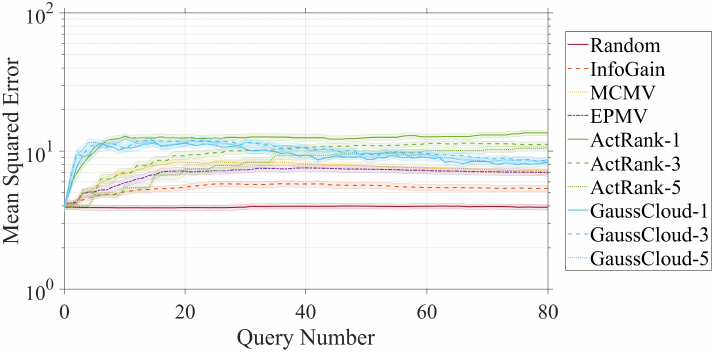}
        \caption{``Decaying'' model,\\mismatched, $d$ = 12}
        \label{fig:mse_12_6}
    \end{subfigure}
    \caption{Mean squared error performance versus number of queries asked for pairwise search in 12 dimensions, plotted with $\pm$ one standard error. All mismatched noise is Gaussian with a ``normalized'' noise constant.}
    \label{fig:mse_12}
\end{figure}

\end{document}